\documentclass{article}

 \usepackage[preprint]{neurips_2026}

\usepackage[utf8]{inputenc} 
\usepackage[T1]{fontenc}    
\usepackage{hyperref}       
\usepackage{url}            
\usepackage{booktabs}       
\usepackage{amsfonts}       
\usepackage{nicefrac}       
\usepackage{microtype}      
\usepackage{xcolor}         
\usepackage{amsmath, amssymb, amsthm, bm}
\usepackage{hyperref}
\usepackage{graphicx}
\usepackage{caption}
\usepackage{subcaption}
\usepackage{multirow}
\usepackage{multicol}
\usepackage[export]{adjustbox} 
\usepackage{algorithm}
\usepackage{algorithmic}
\usepackage{diagbox}
\newtheorem{proposition}{Proposition}
\newtheorem{definition}{Definition}
\theoremstyle{remark}
\newtheorem{remark}{Remark}
\newtheorem*{proposition*}{Proposition}
\theoremstyle{plain}
\newtheorem*{propositionB}{Proposition}

\title{Conditional Compatibility Learning for Context-Dependent Anomaly Detection}

\author{%
  Shashank Mishra \\
  German Research Center for Artificial Intelligence (DFKI) \\
  Kaiserslautern, Germany \\
  \texttt{shashank.mishra@dfki.de}
  \And
  Didier Stricker \\
  DFKI and RPTU Kaiserslautern \\
  Kaiserslautern, Germany \\
  \texttt{didier.stricker@dfki.de}
  \And
  Jason Rambach \\
  German Research Center for Artificial Intelligence (DFKI) \\
  Kaiserslautern, Germany \\
  \texttt{jason.rambach@dfki.de}
}

\begin{document}

\maketitle

\begin{abstract}
Anomaly detection usually assumes that abnormality is an intrinsic property of an observation. A defect is a defect, and a rare object is rare, regardless of where it appears. Many real-world anomalies do not work this way. A runner on a track is normal, but the same runner on a highway is not. The subject is unchanged; only the context makes it anomalous. This setting, long recognized as contextual anomaly detection, remains largely underexplored in modern vision–language systems. The difficulty is not merely empirical, it is formal. When anomaly labels depend on the relation between a subject and its context, any detector reasoning from a global representation that conflates subject and context is provably non-identifiable: two different subject–context configurations can map to the same embedding while requiring opposite labels, and no such detector can be correct on both. This impossibility motivates a different formulation: instead of asking whether an observation deviates from a global notion of normality, the model should ask whether subjects are compatible with their surrounding context. We define this as conditional compatibility learning. We instantiate this framework in CC-CLIP, a vision–language architecture that learns disentangled subject- and context-aware representations from a single image and fuses visual evidence through text-conditioned attention. CC-CLIP achieves state-of-the-art results on real-world contextual anomaly detection, substantially outperforming all existing CLIP-based and context-reasoning baselines. A single-branch variant of CC-CLIP also achieves competitive performance on structural anomaly benchmarks.
\end{abstract}

\section{Introduction}

Anomaly detection in computer vision is typically formulated as identifying observations that deviate from the distribution of normal data. Underlying this formulation is a strong assumption: that abnormality is an intrinsic property of the observation itself. A scratched surface is always defective. A cracked tile is always anomalous. This assumption has driven a decade of progress on benchmarks such as MVTec-AD~\citep{bergmann2019mvtec} and VisA~\citep{zou2022spot}, where anomalies correspond to texture defects, surface damage, or structural irregularities that can be identified from the object alone, independent of where or how it appears.

However, many real-world anomalies are not intrinsic. Consider a person running: entirely normal on a jogging track, but anomalous on a highway. Consider children playing with a ball: normal in a park, but anomalous on a busy street. In these cases, the subject itself is visually ordinary and individually common. Nothing about the runner or the children, viewed in isolation, signals abnormality. The anomaly arises entirely from the incompatibility between the subject and its surrounding context. This form of context-dependent abnormality has long been recognized in the classical anomaly detection literature as \emph{contextual anomaly detection}~\citep{chandola2009anomaly}, and has been studied in object-centric settings under the label of out-of-context detection~\citep{choi2012context, bomatter2021pigs, acharya2022detecting}. Yet it remains largely underexplored in modern vision--language systems, where methods are overwhelmingly optimized for structural or defect-level anomalies and assume that a single global representation of the image is sufficient for detection.

This gap is not merely empirical. We show that it is a fundamental limitation of how representations are constructed. When anomaly labels depend on the relationship between a subject and its context, two visually distinct scenes can map to the same global embedding while requiring opposite labels. No detector built on such a representation can be correct on both. We formalize this as a non-identifiability result (Proposition~\ref{prop:nonidentifiability}, Section~\ref{sec:formulation}) and argue that it explains why existing CLIP-based anomaly detectors \citep{ma2025aaclip,deng2022anomaly,jeong2023winclip,cao2024adaclip,li2024promptad}, despite their strong performance on structural benchmarks, consistently fail on contextual anomalies. Figure~\ref{fig:teaser} illustrates this empirically: CLIP embeddings entangle contextually normal and anomalous observations into overlapping clusters, making them indistinguishable to any downstream classifier.

This inherent limitation motivates a fundamentally different formulation. Instead of asking whether an observation deviates from a global notion of normality, the model should ask whether a subject is compatible with its surrounding context. We define this as \emph{conditional compatibility learning}. Under this formulation, the model must learn to decompose an image into complementary representations that separately capture \emph{what} is present and \emph{where} it appears, reason about their relationship, and generalize to unseen combinations. We instantiate this framework in \textbf{CC-CLIP} (Conditional Compatibility CLIP), a vision--language architecture that achieves this decomposition through text-conditioned adaptation of a frozen CLIP backbone. Paired normal and anomalous text anchors, refined via a disentanglement objective, guide specialization through gradient signals alone. A text-conditioned attention module then fuses the resulting representations to produce a compatibility-based anomaly score.

Our contributions are as follows:
\begin{itemize}
    \item \textbf{Conceptual.} We formalize contextual anomaly detection in the vision--language setting and prove a non-identifiability result showing that global representations are provably insufficient when anomaly labels depend on subject--context compatibility.
    \item \textbf{Methodological.} We propose conditional compatibility learning and instantiate it in CC-CLIP, a CLIP-based architecture that achieves text-conditioned decomposition from a single image for context-aware anomaly reasoning.
    \item \textbf{Empirical.} CC-CLIP achieves state-of-the-art performance on all real-world contextual anomaly detection datasets, substantially outperforming all existing CLIP-based and context-reasoning baselines despite being trained entirely on synthetic data. A single-branch variant also attains competitive results on structural anomaly benchmarks, showing no loss in generality.
\end{itemize}

\begin{figure}[t]
\centering
\includegraphics[width=0.95\linewidth]{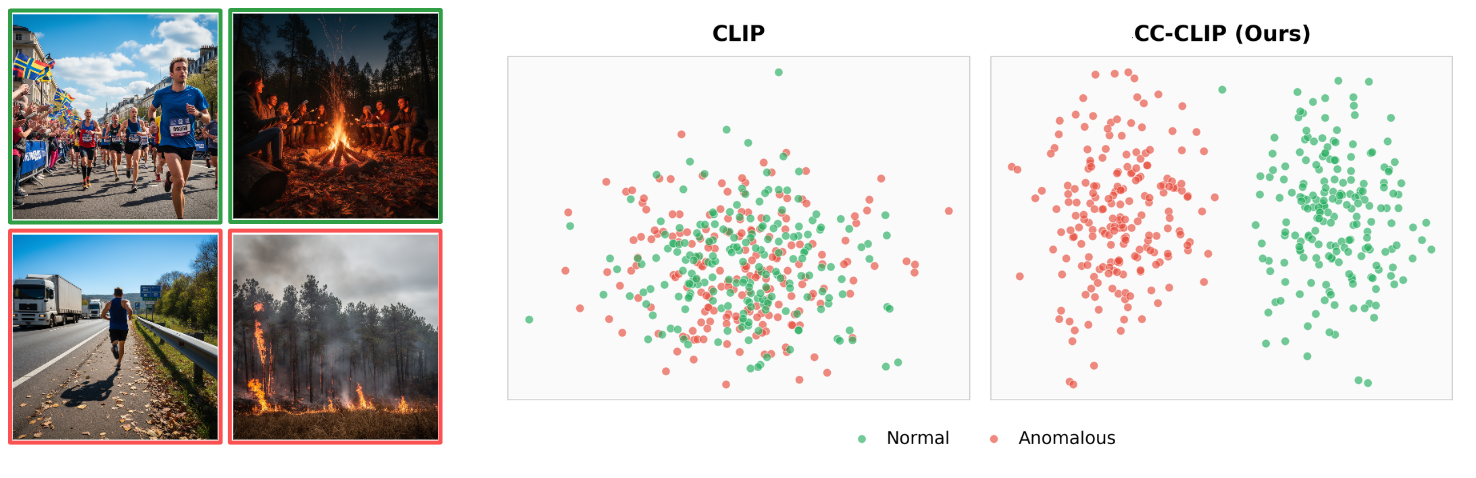}
\caption{Contextual anomalies arise from subject--context incompatibility, not intrinsic appearance. \textbf{Left:} The same subject is normal (\textcolor{green!60!black}{\rule{6pt}{6pt}}) or anomalous (\textcolor{red!80!black}{\rule{6pt}{6pt}}) depending solely on the surrounding context. \textbf{Right:} t-SNE of 1,026 real-world images. CLIP entangles contextually normal and anomalous observations; CC-CLIP learns separated representations, confirming the non-identifiability formalized in Proposition~\ref{prop:nonidentifiability}.}
\label{fig:teaser}
\end{figure}

\section{Related Work}

\textbf{Vision--language anomaly detection.}
Recent anomaly detection methods increasingly leverage pretrained vision--language models such as CLIP~\citep{radford2021learning} for open-vocabulary inference. WinCLIP~\citep{jeong2023winclip} introduces prompt ensembling for zero-shot defect detection, while AnomalyCLIP~\citep{zhou2023anomalyclip} and AdaCLIP~\citep{cao2024adaclip} adapt CLIP via prompt learning for few-shot settings. PromptAD~\citep{li2024promptad} and IIPAD~\citep{lv2025oneforall} incorporate iterative refinement, and AA-CLIP~\citep{ma2025aaclip} injects anomaly-aware supervision into the text encoder. Despite strong performance on structural benchmarks such as MVTec-AD~\citep{bergmann2019mvtec} and VisA~\citep{zou2022spot}, all these methods treat anomaly detection as an object-centric problem where anomalies correspond to local defects or appearance deviations and do not model whether a visually normal subject is compatible with its surrounding context.

\textbf{Contextual anomaly and out-of-context detection.}
Contextual anomaly detection, where anomaly labels depend on the relationship between an observation and its context, is a well-established concept in the classical anomaly detection taxonomy~\citep{chandola2009anomaly}. In computer vision, related work on 
out-of-context (OOC) detection includes graph-based reasoning~\citep{bomatter2021pigs, 
acharya2022detecting} and foundation-model prompting~\citep{roy2025zeroshot}. However, existing OOC benchmarks such as MIT-OOC~\citep{choi2012context} and COCO-OOC~\citep{acharya2022detecting} introduce anomalies through physical or spatial implausibility that can often be detected via low-level visual cues without semantic reasoning; graph-based methods~\citep{bomatter2021pigs, acharya2022detecting} similarly 
reason over spatial object relations rather than text-conditioned semantic compatibility. In contrast, contextual anomaly detection as studied in this work requires recognizing that two individually plausible components are semantically incompatible, demanding relational reasoning beyond appearance-based detection. Unlike video anomaly 
detection~\citep{sultani2018real, acsintoae2022ubnormal}, which identifies anomalies through temporal irregularities, our setting is orthogonal, concerning static semantic subject--context compatibility without temporal information.


\section{Problem Formulation}
\label{sec:formulation}

\textbf{Setup.}
Let $x \in \mathcal{X}$ denote an observed image whose semantic content is determined by a set of latent subjects $\mathcal{S} = \{a_1, \ldots, a_n\}$ where $a_i \in \mathcal{A}$, and a surrounding context $c \in \mathcal{C}$, formally captured by a mapping $x = g(\mathcal{S}, c)$. The anomaly label is determined by a compatibility function
\begin{equation}
    y = h(\mathcal{S}, c) \in \{0, 1\},
\end{equation}
where $y = 1$ indicates subject--context incompatibility. Crucially, $y$ depends on the \emph{relation} between $\mathcal{S}$ and $c$: the same subject may be normal in one context and anomalous in another.

\textbf{Non-identifiability of global representations.}
Standard anomaly detectors operate on a single representation of the full image. We show that this is provably insufficient under context-dependent labels when such representations conflate subject and context, a condition systematically induced by contrastive pretraining. For clarity of exposition, the proposition is stated for a single subject $a \in \mathcal{A}$; the argument extends analogously to $|\mathcal{S}| > 1$.

\begin{proposition}[Non-identifiability under intrinsic representations]
\label{prop:nonidentifiability}
Let $\phi: \mathcal{X} \to \mathcal{Z}$ be any representation mapping and let $f(x) = \psi(\phi(x))$ be a detector for a decision function $\psi: \mathcal{Z} \to \{0,1\}$. If there exists a subject $a \in \mathcal{A}$ and contexts $c, c' \in \mathcal{C}$ such that
\begin{equation}
    \phi(g(a,c)) = \phi(g(a,c')) \quad \text{but} \quad h(a,c) \neq h(a,c'),
\end{equation}
then $f$ cannot be correct on both $x = g(a,c)$ and $x' = g(a,c')$.
\end{proposition}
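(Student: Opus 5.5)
The argument is a direct pigeonhole-style observation: the detector $f = \psi\circ\phi$ factors through $\phi$, so it must assign equal outputs to any two inputs with equal representations. I will write $x = g(a,c)$ and $x' = g(a,c')$. Here "correct on $x$" means $f(x) = h(a,c)$, and "correct on $x'$" means $f(x') = h(a,c')$. The plan is a proof by contradiction that uses only the functional nature of $\psi$.

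First, I use the hypothesis $\phi(x) = \phi(x')$. Since $\psi$ is a function $\mathcal{Z}\to\{0,1\}$, applying it to equal arguments gives
\begin{equation*}
f(x) = \psi(\phi(x)) = \psi(\phi(x')) = f(x').
\end{equation*}

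Second, I suppose for contradiction that $f$ is correct on both inputs. Then
\begin{equation*}
h(a,c) = f(x) = f(x') = h(a,c'),
\end{equation*}
which contradicts the hypothesis $h(a,c)\neq h(a,c')$. Hence at least one of $f(x)\neq h(a,c)$ or $f(x')\neq h(a,c')$ holds. Because labels are binary, the two labels are exactly $\{0,1\}$, so $f$ errs on exactly one of the two inputs, whatever value it outputs on them.

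There is no substantive obstacle. The only care needed is to fix the definition of correctness and to note that nothing is assumed about $\phi$, $\psi$, or $\mathcal{Z}$ beyond their being functions. In particular, no learning or optimization argument enters.

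For the remark that the result extends to $|\mathcal{S}|>1$, I would replace $a$ by a subject set $\mathcal{S}$ throughout. The chain of equalities above is unchanged.
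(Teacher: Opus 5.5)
Your proof is correct and matches the paper's argument essentially verbatim: since $f$ factors through $\phi$, you get $f(x)=f(x')$, and assuming correctness on both inputs then forces $h(a,c)=h(a,c')$, a contradiction. Your extra remark that, because labels are binary, $f$ errs on \emph{exactly} one of the two inputs is a valid small sharpening that the paper does not state.
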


\begin{proof}
Since $\phi(x) = \phi(x')$, we have $f(x) = \psi(\phi(x)) = \psi(\phi(x')) = f(x')$. But correctness requires $f(x) = h(a,c)$ and $f(x') = h(a,c')$, which yields $h(a,c) = h(a,c')$, contradicting the assumption.
\end{proof}
\noindent A detailed proof with formal definitions of all mappings is provided in Appendix~\ref{app:proof}.

Proposition~\ref{prop:nonidentifiability} formalizes the failure mode illustrated in Figure~\ref{fig:teaser}: when a representation $\phi$ maps two images of the same subject in different contexts to nearby embeddings, no downstream classifier can distinguish their labels. This is not a failure of a particular model but a fundamental limitation of detectors whose global representations conflate subject and context.


\textbf{Learning requirements.}
Proposition~\ref{prop:nonidentifiability} implies that effective contextual anomaly detection methods must satisfy:
\begin{itemize}
    \item \emph{Relational reasoning.} The model must represent subject and context separately rather than through a single entangled embedding.
    \item \emph{Context-sensitive discrimination.} Predictions must change when context changes while the subject is held fixed.
    \item \emph{Cross-context generalization.} The model must transfer to unseen subject--context combinations by learning semantic compatibility rather than memorizing specific pairings.
\end{itemize}
These requirements directly motivate the architecture introduced in Section~\ref{sec:method}.

\section{Method: CC-CLIP}
\label{sec:method}

\subsection{Overview}

\begin{figure}[t]
\centering
\includegraphics[width=\linewidth]{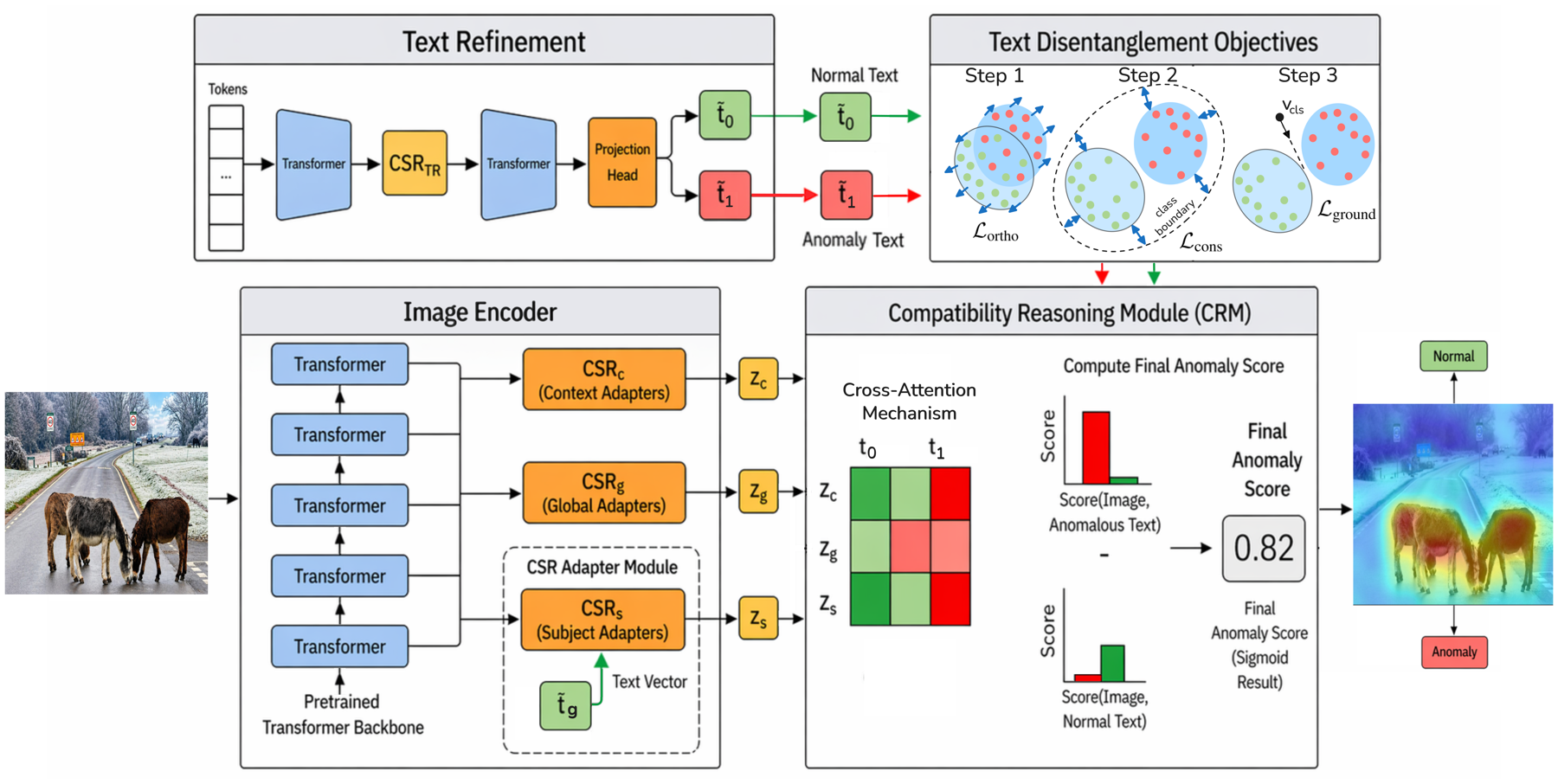}
\caption{CC-CLIP framework overview. Text refinement produces paired normal/anomalous anchors, while three text-guided CSR adapters decompose the input image into subject, context, and global representations, fused via text-conditioned cross-attention in the CRM to produce a compatibility-based anomaly score.}
\label{fig:architecture}
\end{figure}

Proposition~\ref{prop:nonidentifiability} motivates decomposed subject and context representations as a principled solution for contextual anomaly detection. CC-CLIP achieves this through text-conditioned specialization of a frozen CLIP backbone. Given an input image $x$, three lightweight adapter pathways process $x$ in parallel, each guided by a distinct textual role. A text refinement module produces paired normal and anomalous anchors, and a fusion module aggregates visual streams via text-conditioned attention into a compatibility-based anomaly score. The full pipeline is summarized in 
Algorithm~\ref{alg:cococlip} and Figure~\ref{fig:architecture}.



\begin{algorithm}[!ht]
\caption{CC-CLIP: Forward Pass and Training Procedure (two-stage; see Appendix~\ref{app:training}).}
\label{alg:cococlip}
\begin{algorithmic}[1]
\REQUIRE Image $x$, class name $\texttt{cls}$, label $y \in \{0,1\}$
\STATE \textbf{Branch-specific text prompts:}
\STATE \quad $T_s \leftarrow$ \texttt{"a photo of [cls]"}
\STATE \quad $T_c \leftarrow$ \texttt{"a scene with [cls] surroundings"}
\STATE \quad $T_g \leftarrow$ \texttt{"a photo of [cls] in a [context]"}
\STATE \textbf{Visual encoding} (shared frozen CLIP + branch-specific CSR):
\STATE \quad $z_s, z_c, z_g \leftarrow \text{CSR}_s(f_\theta(x)),\; \text{CSR}_c(f_\theta(x)),\; \text{CSR}_g(f_\theta(x))$
\STATE \textbf{Text encoding:}
\STATE \quad $e_s, e_c, e_g \leftarrow g_\psi(T_s),\; g_\psi(T_c),\; g_\psi(T_g)$ \hfill $\triangleright$ branch supervision (training only)
\STATE \quad $\tilde{t}_0, \tilde{t}_1 \leftarrow \text{TextRefine}(g_\psi(\mathcal{T}_{\text{norm}}), g_\psi(\mathcal{T}_{\text{ano}}))$ \hfill $\triangleright$ normal/anomalous anchors
\STATE \textbf{CRM fusion:}
\STATE \quad $q \leftarrow W_q\tilde{t}_1, \quad K \leftarrow W_k[z_s, z_c, z_g]$
\STATE \quad $\alpha \leftarrow \text{softmax}(qK^\top / \sqrt{d})$
\STATE \quad $z_{\text{crm}} \leftarrow \sum_{b} \alpha_b z_b$
\STATE \textbf{Anomaly score:}
\STATE \quad $s(x) \leftarrow \sigma\!\left(\text{sim}(z_{\text{crm}}, \tilde{t}_1) - \text{sim}(z_{\text{crm}}, \tilde{t}_0)\right)$
\STATE \textbf{Training losses:}
\STATE \quad $\mathcal{L}_{\text{align}} \leftarrow \sum_{b}(1 - \text{sim}(z_b, e_b))$
\STATE \quad $\mathcal{L}_{\text{decorr}} \leftarrow \sum_{i \neq j} |\text{sim}(z_i, z_j)|^2$
\STATE \quad $\mathcal{L}_{\text{fuse}} \leftarrow \text{CE}([\text{sim}(z_{\text{crm}},\tilde{t}_0),\, \text{sim}(z_{\text{crm}},\tilde{t}_1)],\, y) + \lambda_{\text{fc}}\mathcal{L}_{\text{fuse\text{-}cons}} -\lambda_{\text{fe}}\mathcal{L}_{\text{fuse\text{-}ent}}$
\STATE \quad $\mathcal{L}_{\text{text}} \leftarrow \lambda_o \mathcal{L}_{\text{ortho}} + \lambda_c \mathcal{L}_{\text{cons}} + \lambda_g \mathcal{L}_{\text{ground}}$
\STATE \quad $\mathcal{L} \leftarrow \mathcal{L}_{\text{align}} + \mathcal{L}_{\text{decorr}} + \mathcal{L}_{\text{fuse}} + \mathcal{L}_{\text{text}}$
\end{algorithmic}
\end{algorithm}

\subsection{Text-Conditioned Branch Specialization}

CC-CLIP produces three complementary representations from the same image by equipping a shared frozen CLIP backbone with branch-specific adapters guided by distinct textual roles: subject identity, surrounding context, and global scene.

\textbf{Context-Selective Residuals (CSR).} For each branch $b \in \{s, c, g\}$, a lightweight residual adapter is inserted into the first $K$ transformer layers. At layer $i$, token representations are updated as:
\begin{equation}
\label{eq:csr_module}
    \mathbf{x}^{(i)}_b \leftarrow (1 - \lambda_i)\, \mathbf{x}^{(i)} + \lambda_i\, \text{CSR}_b^{(i)}\!\left(\mathbf{x}^{(i)}\right), \quad i = 1, \ldots, K,
\end{equation}
where $\lambda_i \in [0,1]$ is a learnable mixing coefficient and $\text{CSR}_b^{(i)}$ is a two-layer residual module. Layers beyond $K$ remain frozen, preserving CLIP's pretrained alignment. Each branch produces a class-token embedding $z_b \in \mathbb{R}^D$. The corresponding branch text embedding $e_b = g_\psi(T_b)$ is obtained by encoding the role-specific prompt $T_b$ through the frozen CLIP text encoder $g_\psi$.

\textbf{Text-guided specialization.} Each branch is supervised to align with a role-specific text prompt (Algorithm~\ref{alg:cococlip}, lines 1--4), guiding each adapter to capture a distinct aspect of the image; these prompts are used only during training and not at inference:
\begin{equation}
    \mathcal{L}_{\text{align}} = \sum_{b \in \{s,c,g\}} \left(1 - \text{sim}(z_b, e_b)\right).
\end{equation}

\textbf{Branch decorrelation.} To prevent representational collapse, we enforce decorrelation across branches:
\begin{equation}
    \mathcal{L}_{\text{decorr}} = \sum_{i \neq j} \left|\text{sim}(z_i, z_j)\right|^2
\end{equation}
ensuring branches capture complementary information as required by Proposition~\ref{prop:nonidentifiability}. Alignment defines \emph{what} each branch learns; decorrelation prevents their representations from collapsing toward one another.

\subsection{Text Refinement for Contextual States}
\label{text_refinement}
Contextual anomaly detection requires the text space to represent two distinct semantic 
states of the same concept: contextually normal and contextually anomalous. We refine 
CLIP's text representations using a lightweight adaptation of the first $L$ transformer 
layers via a gated residual pathway:
\begin{equation}
    \mathbf{z}^{(i)} \leftarrow (1 - \gamma_i)\, \mathbf{z}^{(i)} + \gamma_i\, \text{TR}^{(i)}\!\left(\mathbf{z}^{(i)}\right), \quad i = 1, \ldots, L,
\end{equation}
where $\gamma_i$ is a learnable gate and $\text{TR}^{(i)}$ is a two-layer MLP. This 
produces paired embeddings $(\tilde{t}_0, \tilde{t}_1)$ representing normal and 
anomalous contextual interpretations. The paired embeddings are optimized using three 
complementary objectives:
\begin{equation}
\label{eq:text_refinement}
    \mathcal{L}_{\text{text}} = \lambda_o \underbrace{\langle \tilde{t}_0, \tilde{t}_1 \rangle^2}_{\mathcal{L}_{\text{ortho}}} + \lambda_c \underbrace{\sum_{k \in \{0,1\}} \|\tilde{t}_k - \tilde{t}_\mu\|^2_2}_{\mathcal{L}_{\text{cons}}} + \lambda_g \underbrace{\left(1 - \text{sim}(\tilde{t}_\mu, v_{\text{cls}})\right)}_{\mathcal{L}_{\text{ground}}},
\end{equation}
where $\tilde{t}_\mu = \frac{1}{2}(\tilde{t}_0 + \tilde{t}_1)$ is the un-normalized 
mean and $v_{\text{cls}}$ is the frozen CLIP image encoder's class token. Orthogonality ($\mathcal{L}_{\text{ortho}}$) 
separates normal from anomalous embeddings, consistency ($\mathcal{L}_{\text{cons}}$) 
preserves shared class identity in the full embedding space rather than scalar cosine 
similarity alone, and grounding ($\mathcal{L}_{\text{ground}}$) maintains alignment 
with the visual domain.

\subsection{Compatibility Reasoning and Anomaly Scoring}
\label{compatrm}

The Compatibility Reasoning Module (CRM) aggregates the three representations via text-conditioned attention. Using the anomalous text anchor $\tilde{t}_1$ as a query, CRM computes:
\begin{equation}
\label{eq:CRM_1}
    q = W_q\tilde{t}_1, \quad K = W_k[z_s, z_c, z_g], \quad 
\alpha = \text{softmax}\!\left(\frac{qK^\top}{\sqrt{d}}\right)
\end{equation}
where $W_q, W_k \in \mathbb{R}^{d \times D}$ are learned projections. The fused representation is:
\begin{equation}
\label{eq:CRM_2}
    z_{\text{crm}} = \sum_{b \in \{s,c,g\}} \alpha_b\, z_b.
\end{equation}

The image-level anomaly score is computed as the cosine margin between anomalous and normal compatibility:
\begin{equation}
\label{eq:CRM_3}
    s(x) = \sigma\!\left(\text{sim}(z_{\text{crm}}, \tilde{t}_1) - \text{sim}(z_{\text{crm}}, \tilde{t}_0)\right),
\end{equation}
where $\sigma$ is the sigmoid function. Higher scores indicate stronger subject--context incompatibility.

CRM is supervised via cross-entropy on the fused representation, with consistency and entropy regularization:
\begin{equation}
\label{eq:CRM_4}
    \mathcal{L}_{\text{fuse}} = \text{CE}\!\left(\text{sim}(z_{\text{crm}}, \tilde{t}_0),\, \text{sim}(z_{\text{crm}}, \tilde{t}_1),\, y\right) + \lambda_{\text{fc}}\left\|\ell_{\text{crm}} - \tfrac{1}{3}\textstyle\sum_b \ell_b\right\|^2_2 - \lambda_{\text{fe}}\, H(\alpha),
\end{equation}
where $\ell_{\text{crm}} = [\text{sim}(z_{\text{crm}},\tilde{t}_0), \text{sim}(z_{\text{crm}},\tilde{t}_1)]$ and $\ell_b = [\text{sim}(z_b,\tilde{t}_0), \text{sim}(z_b,\tilde{t}_1)]$ are the scaled fused and branch logit vectors, and $H(\alpha)$ is the entropy of the attention weights.

\textbf{Training objective.} CC-CLIP is trained end-to-end using:
\begin{equation}
    \mathcal{L} = \mathcal{L}_{\text{align}} + \mathcal{L}_{\text{decorr}} + \mathcal{L}_{\text{fuse}} + \mathcal{L}_{\text{text}}
\end{equation}
The CLIP backbone remains frozen throughout; only CSR adapters, text refinement layers, and CRM projections are trained.

\section{Experiments}
\subsection{Experimental Setup}
\label{sec:setup}

\textbf{Datasets.}
No existing benchmark supports training and evaluating contextual anomaly detection models: MIT-OOC~\citep{choi2012context} contains only 42 test images with no training split, and COCO-OOC~\citep{acharya2022detecting} targets physical implausibility rather than semantic incompatibility. To address this, we construct \textbf{CAAD}, a benchmark for contextual anomaly detection comprising two complementary splits. \textbf{CAAD-Syn} contains 23K synthetic images spanning 15 object and action categories with diverse scene contexts and balanced normal/anomalous labels; synthetic image quality is validated against real-image benchmarks in Appendix~\ref{app:dataset}. \textbf{CAAD-Real} contains 1,026 real-world images (547 normal, 479 anomalous) collected from diverse web sources, covering the same categories. All models are trained exclusively on CAAD-Syn and evaluated on CAAD-Real, ensuring that reported results reflect genuine synthetic-to-real transfer. 

Following the contextual anomaly detection setting, training uses both normal and anomalous examples per class, as discriminating subject--context compatibility requires exposure to both compatible and incompatible pairings. Normal-only training results are provided in Appendix~\ref{app:ablations}.


\textbf{Baselines.}
We compare against three categories of methods. (i)~\emph{CLIP-based anomaly detectors}: 
WinCLIP~\citep{jeong2023winclip}, AnomalyCLIP~\citep{zhou2023anomalyclip}, 
AdaCLIP~\citep{cao2024adaclip}, PromptAD~\citep{li2024promptad}, 
AA-CLIP~\citep{ma2025aaclip}, and IIPAD~\citep{lv2025oneforall}. 
(ii)~\emph{Context-reasoning methods}: CRTNet~\citep{bomatter2021pigs}. 
(iii)~\emph{Vision--language models}: InternVL2~\citep{chen2024internvl} and 
Qwen2-VL~\citep{wang2024qwen2vl}, evaluated zero-shot without fine-tuning to assess 
out-of-the-box VLM capability relative to task-specific models. Prompting details 
and extended variants are in Appendix~\ref{app:vlm_prompts}. All CLIP-based baselines 
are retrained under identical settings using the same backbone and data splits.

\textbf{Metrics.}
We report image-level AUROC (I-AUROC) and average precision (I-AP) on CAAD-Real, focusing on image-level anomaly detection.

\textbf{Implementation.}
CC-CLIP uses a frozen CLIP ViT-L/14 backbone. Only CSR adapters ($<$7\% additional parameters), text refinement layers, and CRM projections are trained. Training uses Adam with cosine decay for 20 epochs on a single H100 GPU. Full hyperparameters are in Appendix~\ref{app:hyperparams}.

All results are averaged over three independent runs with different random seeds.

\subsection{Main Results}
\label{sec:results}


\begin{table*}[t]
\centering
\caption{Contextual anomaly detection on \textbf{CAAD-Real}. All methods train on CAAD-Syn and evaluate on real-world images. I-AUROC and I-AP reported. \textbf{Bold} and \underline{underline} denote best and second best; dashes indicate not applicable.}
\label{tab:main}
\resizebox{\textwidth}{!}{%
\begin{tabular}{llcccccc}
\toprule
& & \multicolumn{2}{c}{\textbf{4-shot}} & \multicolumn{2}{c}{\textbf{16-shot}} & \multicolumn{2}{c}{\textbf{Full}} \\
\cmidrule(lr){3-4} \cmidrule(lr){5-6} \cmidrule(lr){7-8}
\textbf{Method} & \textbf{Type} & I-AUROC & I-AP & I-AUROC & I-AP & I-AUROC & I-AP \\
\midrule
InternVL-1B~\citep{chen2024internvl} & VLM & -- & -- & -- & -- & 44.53 & 50.20 \\
Qwen2-VL-2B~\citep{wang2024qwen2vl} & VLM & -- & -- & -- & -- & \underline{73.12} & \underline{71.20} \\
\midrule
CRTNet~\citep{bomatter2021pigs} & OOC & 52.23 & 54.85 & 58.56 & 60.23 & 61.42 & 64.30 \\
\midrule
CLIP~\citep{radford2021learning} & CLIP & 36.20 & 40.52 & 38.56 & 42.58 & 41.20 & 45.20 \\
WinCLIP~\citep{jeong2023winclip} & CLIP & 48.05 & 52.52 & 51.23 & 54.56 & 55.24 & 57.42 \\
AnomalyCLIP~\citep{zhou2023anomalyclip} & CLIP & \underline{58.60} & 57.25 & 59.10 & 60.95 & 60.10 & 61.75 \\
AdaCLIP~\citep{cao2024adaclip} & CLIP & 55.62 & 56.23 & 52.89 & 55.78 & 54.26 & 56.28 \\
PromptAD~\citep{li2024promptad} & CLIP & 55.20 & 59.60 & 59.23 & \underline{62.10} & 59.56 & 60.23 \\
AA-CLIP~\citep{ma2025aaclip} & CLIP & 57.58 & \underline{60.20} & \underline{60.55} & 61.87 & 61.52 & 65.61 \\
IIPAD~\citep{lv2025oneforall} & CLIP & 42.69 & 43.63 & 45.20 & 45.56 & 45.03 & 47.59 \\
\midrule
\textbf{CC-CLIP (Ours)} & -- & \textbf{74.23} & \textbf{74.55} & \textbf{77.51} & \textbf{78.84} & \textbf{80.21} & \textbf{81.12} \\
\bottomrule
\end{tabular}%
}
\end{table*}

Table~\ref{tab:main} reports contextual anomaly detection on CAAD-Real, where all 
methods train on CAAD-Syn and evaluate without fine-tuning. CC-CLIP substantially 
outperforms all baselines across every shot setting. CLIP-based detectors 
consistently fail on contextual anomalies despite strong structural benchmark performance, 
as their global representations entangle subject and context per 
Proposition~\ref{prop:nonidentifiability}. CRTNet improves over CLIP-based approaches 
but remains limited by appearance-level cues. VLM baselines (InternVL, Qwen2-VL), 
despite powerful reasoning capabilities and potential exposure to web-sourced CAAD-Real 
images during pretraining, lack the fine-grained subject--context decomposition that 
CC-CLIP learns through text-conditioned specialization. Notably, CC-CLIP with 4 
shots surpasses all baselines at full supervision, demonstrating strong data efficiency.


\subsection{Generalization to OOC Benchmarks}
\label{sec:ooc}


We evaluate cross-dataset generalization by testing CC-CLIP on MIT-OOC~\citep{choi2012context} and COCO-OOC~\citep{acharya2022detecting}, trained on CAAD-Syn and applied zero-shot without dataset-specific adaptation with threshold of $0.5$. As shown in Table~\ref{tab:ooc}, CC-CLIP achieves the best performance on both benchmarks, outperforming prior methods by a clear margin, indicating strong transferability and robust subject--context reasoning.

\begin{table}[!ht]
\centering
\caption{Out-of-context detection accuracy (\%) on established OOC benchmarks. CC-CLIP is trained on CAAD-Syn and evaluated zero-shot. Prior results are from their respective publications.}
\label{tab:ooc}
\begin{tabular}{lcc}
\toprule
\textbf{Method} & \textbf{MIT-OOC} & \textbf{COCO-OOC} \\
\midrule
COC~\citep{choi2012context} & 73.29 & -- \\
GCRN~\citep{acharya2022detecting} & -- & 84.85 \\
FM Zero-shot~\citep{roy2025zeroshot} & 90.82 & 87.26 \\
\midrule
\textbf{CC-CLIP (Ours)} & \textbf{95.24} & \textbf{97.20} \\
\bottomrule
\end{tabular}
\end{table}


\subsection{Ablation Studies}
\label{sec:ablation}

We ablate the key components of CC-CLIP on CAAD-Real under the full training setting. Extended ablations are provided in Appendix~\ref{app:ablations}.

\textbf{Component analysis.} Table~\ref{tab:component} shows the incremental contribution of each module. Frozen CLIP fails entirely on contextual anomalies, confirming the non-identifiability predicted by Proposition~\ref{prop:nonidentifiability}. CSR provides the largest single gain by enabling branch-level specialization. Text refinement further improves performance by sharpening the separation between normal and anomalous semantic anchors, and CRM contributes additional gains through text-conditioned fusion.

\begin{table}[!ht]
\centering
\begin{minipage}[t]{0.48\textwidth}
\centering
\caption{Component ablation on CAAD-Real.}
\label{tab:component}
\begin{tabular}{lc}
\toprule
\textbf{Configuration} & \textbf{I-AUROC} \\
\midrule
Frozen CLIP (no adaptation) & 41.20 \\
+ CSR & 65.53 \\
+ CSR + Text refinement & 72.36 \\
+ CSR + Text + CRM (Full) & \textbf{80.21} \\
\bottomrule
\end{tabular}
\end{minipage}
\hfill
\begin{minipage}[t]{0.48\textwidth}
\centering
\caption{Design ablation on CAAD-Real.}
\label{tab:design}
\begin{tabular}{lc}
\toprule
\textbf{Configuration} & \textbf{I-AUROC} \\
\midrule
Full CC-CLIP & \textbf{80.21} \\
w/o $\mathcal{L}_{\text{decorr}}$ & 72.80 \\
w/o text specialization & 66.15 \\
Global branch only & 60.45 \\
Subject branch only & 63.22 \\
Context branch only & 58.87 \\
\bottomrule
\end{tabular}
\end{minipage}
\end{table}



\textbf{Design analysis.} Table~\ref{tab:design} validates three critical design decisions. Text-guided specialization is the largest contributor; replacing branch-specific prompts with identical ones causes the steepest drop, confirming language-driven decomposition as the primary mechanism. Removing decorrelation also degrades performance substantially, showing explicit anti-collapse supervision is necessary even with distinct text guidance. Finally, no single branch achieves competitive performance and collapsed branches without $\mathcal{L}_{\text{decorr}}$ still outperform them via residual diversity, jointly validating that decomposed representations are necessary for contextual reasoning, as established by Proposition~\ref{prop:nonidentifiability}.

\begin{table}[!ht]
\centering
\begin{minipage}[t]{0.48\textwidth}
\centering
\caption{Text disentanglement loss ablation.}
\label{tab:textloss}
\begin{tabular}{lc}
\toprule
\textbf{Configuration} & \textbf{I-AUROC} \\
\midrule
Full CC-CLIP & \textbf{80.21} \\
w/o $\mathcal{L}_{\text{ortho}}$ & 74.50 \\
w/o $\mathcal{L}_{\text{cons}}$ & 76.20 \\
w/o $\mathcal{L}_{\text{ground}}$ & 73.80 \\
w/o $\mathcal{L}_{\text{ortho}}$ + $\mathcal{L}_{\text{cons}}$ & 71.10 \\
Only CE (no text losses) & 70.10 \\
\bottomrule
\end{tabular}
\end{minipage}
\hfill
\begin{minipage}[t]{0.48\textwidth}
\centering
\caption{Fusion mechanism comparison.}
\label{tab:fusion}
\begin{tabular}{lc}
\toprule
\textbf{Fusion Method} & \textbf{I-AUROC} \\
\midrule
Average fusion & 70.30 \\
Static learned weights & 71.50 \\
Concat + Linear & 72.20 \\
CRM (Ours) & \textbf{80.21} \\
\bottomrule
\end{tabular}
\end{minipage}
\end{table}


\textbf{Text loss analysis.} Table~\ref{tab:textloss} isolates each text disentanglement term. All three losses contribute complementary benefits, with grounding and orthogonality having the largest impact. Removing both together causes a larger drop than either alone, confirming complementarity: orthogonality separates normal from anomalous anchors while consistency preserves shared class identity.

\textbf{Fusion analysis.} Table~\ref{tab:fusion} compares CRM against standard fusion strategies using the same tri-branch features. Simple averaging, static weights, and linear projection all underperform CRM. This confirms that text-conditioned adaptive routing is essential for contextual anomaly detection, as static fusion cannot adjust branch contributions based on the subject--context relationship in each image.

\begin{table}[!ht]
\centering
\begin{minipage}[t]{0.48\textwidth}
\centering
\caption{Parameter and inference comparison.}
\label{tab:efficiency}
\small
\setlength{\tabcolsep}{3pt}
\begin{tabular}{lcccc}
\toprule
\textbf{Method} & \textbf{Trainable} & \textbf{Total} & \textbf{ms/img} \\
\midrule
CRTNet & 32M & 105M & 26 \\
AdaCLIP & 5.56M & 427M & 25 \\
AA-CLIP & 12.58M & 434M & 19 \\
IIPAD & 1.5B & 1.9B & 21 \\
\midrule
CC-CLIP & 34.80M & 462M & 18 \\
\bottomrule
\end{tabular}
\end{minipage}
\hfill
\begin{minipage}[t]{0.48\textwidth}
\centering
\vspace{0pt}
\includegraphics[width=\linewidth]{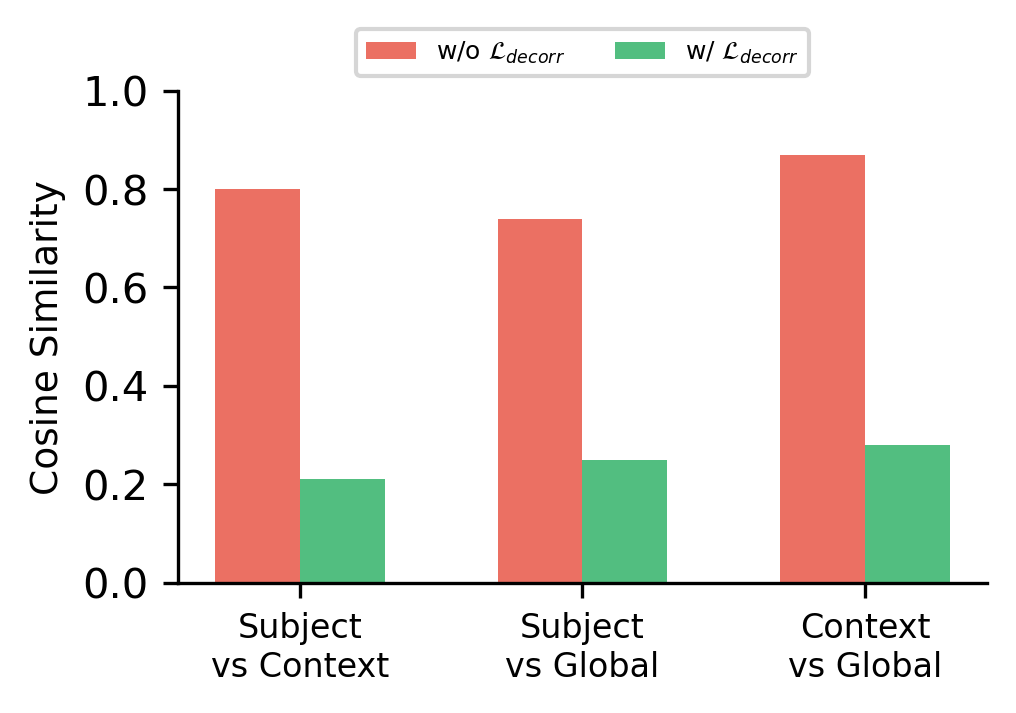}
\captionof{figure}{Branch cosine similarity with and without $\mathcal{L}_{\text{decorr}}$.}
\label{fig:decorr}
\end{minipage}
\end{table}

\textbf{Efficiency.} Table~\ref{tab:efficiency} compares trainable parameters and inference cost. CC-CLIP trains more parameters than AdaCLIP or AA-CLIP, but this overhead corresponds directly to the tri-branch adapters and CRM fusion that drive the performance gains; each component is a lightweight adapter on a frozen backbone 
requiring no iterative optimization or dataset-specific tuning. Inference speed remains competitive at 18ms per image despite parallel branch execution. 


\textbf{Branch decorrelation.} Figure~\ref{fig:decorr} shows that without $\mathcal{L}_{\text{decorr}}$, all three branches collapse to near-identical features despite distinct text supervision, confirming that text guidance alone is insufficient. With decorrelation, pairwise cosine similarity drops substantially, validating the design choice and explaining the performance gap in Table~\ref{tab:design}.

\subsection{Qualitative Analysis}
\label{sec:qualitative}

Figure~\ref{fig:qualitative} visualizes CC-CLIP's anomaly attribution on CAAD-Real. The model consistently localizes the subject responsible for the contextual violation, confirming that the learned representations capture subject--context incompatibility rather than appearance-level cues. Additional visualizations across all categories are provided in Appendix~\ref{app:qualitative}. Notably, CC-CLIP correctly handles multi-object scenes (e.g., the herd of cows in Figure~\ref{fig:qualitative}), localizing all contextually incompatible entities without requiring per-instance supervision.

\begin{figure}[t]
\centering
\setlength{\tabcolsep}{1pt}
\begin{tabular}{cccc}
\includegraphics[width=0.24\linewidth]{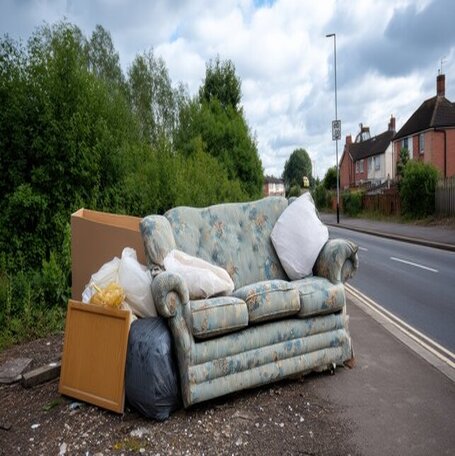} &
\includegraphics[width=0.24\linewidth]{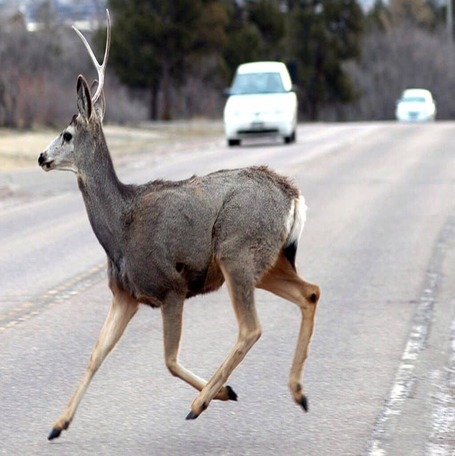} &
\includegraphics[width=0.24\linewidth]{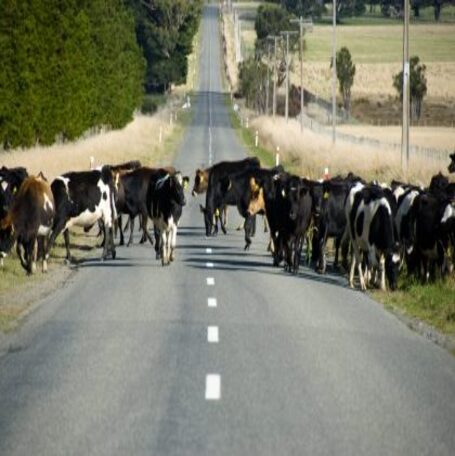} &
\includegraphics[width=0.24\linewidth]{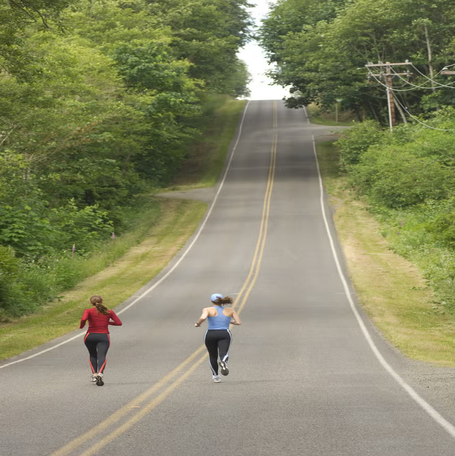} \\
\includegraphics[width=0.24\linewidth]{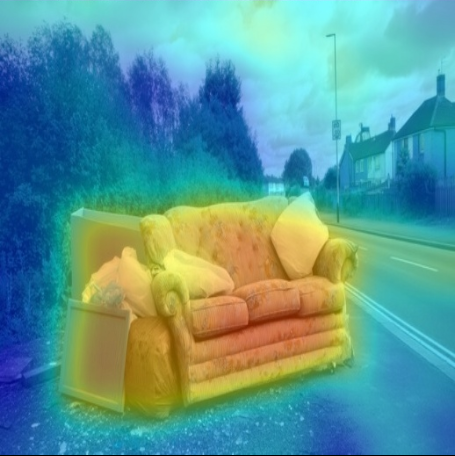} &
\includegraphics[width=0.24\linewidth]{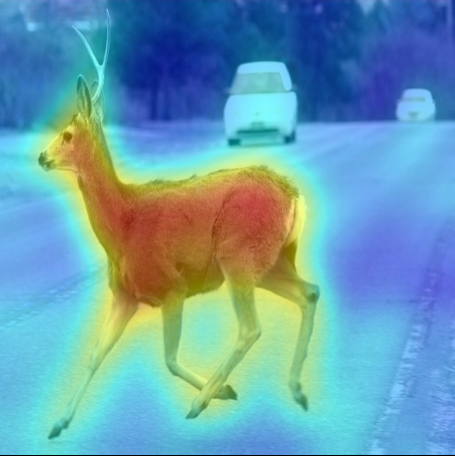} &
\includegraphics[width=0.24\linewidth]{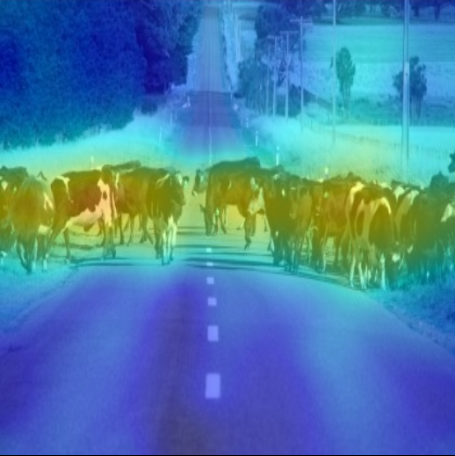} &
\includegraphics[width=0.24\linewidth]{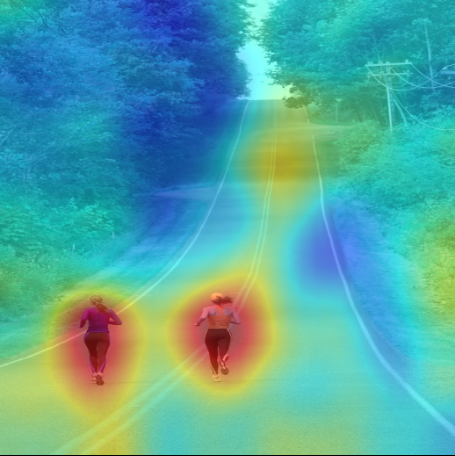} \\
\small\textit{Sofa} & \small\textit{Animal} & \small\textit{Animal} & \small\textit{Person running} \\
\end{tabular}
\caption{Anomaly heatmaps on CAAD-Real. CC-CLIP localizes the contextually incompatible subject in each scene.}
\label{fig:qualitative}
\end{figure}

\section{Conclusion}
\label{sec:conclusion}

We addressed the problem of contextual anomaly detection, where the same subject can be normal or anomalous depending solely on its surrounding context. We formalized a non-identifiability result showing that standard global representations are provably insufficient for this setting and proposed conditional compatibility learning as a principled alternative. Our instantiation, CC-CLIP, achieves text-conditioned decomposition of a single image into complementary representations without any spatial supervision, enabling subject--context compatibility reasoning through language-guided specialization and adaptive fusion. Trained entirely on synthetic data, CC-CLIP substantially outperforms all existing methods on real-world contextual anomaly detection and generalizes effectively to established OOC benchmarks. We believe contextual anomaly detection is a critical yet underexplored problem in visual understanding, and we release our benchmark and code to support future research in this direction.

\textbf{Limitations.} Our formulation assumes a dominant subject entity whose compatibility with the context determines the anomaly label. Extending to scenes with multiple interacting subjects, where the anomaly arises from inter-object relationships rather than subject--context mismatch, remains open. 

{
\small
\bibliographystyle{plain}
\bibliography{mybibliography}

@String(IJCAI = {IJCAI})

@inproceedings{bergmann2019mvtec,
  title={{MVTec AD}--A comprehensive real-world dataset for unsupervised anomaly detection},
  author={Bergmann, Paul and Fauser, Michael and Sattlegger, David and Steger, Carsten},
  booktitle={Proceedings of the IEEE/CVF conference on computer vision and pattern recognition},
  pages={9592--9600},
  year={2019}
}

@inproceedings{zou2022spot,
  title={Spot-the-difference self-supervised pre-training for anomaly detection and segmentation},
  author={Zou, Yang and Jeong, Jongheon and Pemula, Latha and Zhang, Dongqing and Dabeer, Onkar},
  booktitle={European Conference on Computer Vision},
  pages={392--408},
  year={2022},
  organization={Springer}
}

@inproceedings{sultani2018real,
  title={Real-world anomaly detection in surveillance videos},
  author={Sultani, Waqas and Chen, Chun and Shah, Mubarak},
  booktitle={Proceedings of the IEEE Conference on Computer Vision and Pattern Recognition},
  pages={6479--6488},
  year={2018}
}

@article{wang2024qwen2vl,
  title={{Qwen2-VL}: Enhancing Vision-Language Model's Perception of the World at Any Resolution},
  author={Wang, Peng and Bai, Shuai and Tan, Sinan and Wang, Shijie and Fan, Zhihao and Bai, Jinze and Chen, Keqin and Liu, Xuejing and Wang, Jialin and Ge, Wenbin and Fan, Yang and Dang, Kai and Du, Mengfei and Ren, Xuancheng and Men, Rui and Liu, Dayiheng and Zhou, Chang and Zhou, Jingren and Lin, Junyang},
  journal={arXiv preprint arXiv:2409.12191},
  year={2024}
}

@inproceedings{acsintoae2022ubnormal,
  title={UBnormal: New benchmark for supervised open-set video anomaly detection},
  author={Acsintoae, Andra and Florescu, Andrei and Georgescu, Mariana-Iuliana and Mare, Tudor and Sumedrea, Paul and Ionescu, Radu Tudor and Khan, Fahad Shahbaz and Shah, Mubarak},
  booktitle={Proceedings of the IEEE/CVF Conference on Computer Vision and Pattern Recognition},
  pages={20143--20153},
  year={2022}
}

@inproceedings{radford2021learning,
  title={Learning transferable visual models from natural language supervision},
  author={Radford, Alec and Kim, Jong Wook and Hallacy, Chris and Ramesh, Aditya and Goh, Gabriel and Agarwal, Sandhini and Sastry, Girish and Askell, Amanda and Mishkin, Pamela and Clark, Jack and others},
  booktitle={International Conference on Machine Learning},
  pages={8748--8763},
  year={2021},
  organization={PMLR}
}

@inproceedings{li2024promptad,
  title={{Promptad}: Zero-shot anomaly detection using text prompts},
  author={Li, Yiting and Goodge, Adam and Liu, Fayao and Foo, Chuan-Sheng},
  booktitle={Proceedings of the IEEE/CVF Winter Conference on Applications of Computer Vision},
  pages={1093--1102},
  year={2024}
}

@inproceedings{ma2025aaclip,
  title={{Aa-clip}: Enhancing zero-shot anomaly detection via anomaly-aware clip},
  author={Ma, Wenxin and Zhang, Xu and Yao, Qingsong and Tang, Fenghe and Wu, Chenxu and Li, Yingtai and Yan, Rui and Jiang, Zihang and Zhou, S Kevin},
  booktitle={Proceedings of the Computer Vision and Pattern Recognition Conference},
  pages={4744--4754},
  year={2025}
}

@inproceedings{jeong2023winclip,
  title={{Winclip}: Zero-/few-shot anomaly classification and segmentation},
  author={Jeong, Jongheon and Zou, Yang and Kim, Taewan and Zhang, Dongqing and Ravichandran, Avinash and Dabeer, Onkar},
  booktitle={Proceedings of the IEEE/CVF Conference on Computer Vision and Pattern Recognition},
  pages={19606--19616},
  year={2023}
}

@inproceedings{cao2024adaclip,
  title={{Adaclip}: Adapting clip with hybrid learnable prompts for zero-shot anomaly detection},
  author={Cao, Yunkang and Zhang, Jiangning and Frittoli, Luca and Cheng, Yuqi and Shen, Weiming and Boracchi, Giacomo},
  booktitle={European Conference on Computer Vision},
  pages={55--72},
  year={2024},
  organization={Springer}
}

@inproceedings{lv2025oneforall,
  title={One-for-All Few-Shot Anomaly Detection via Instance-Induced Prompt Learning},
  author={Lv, Wenxi and Su, Qinliang and Xu, Wenchao},
  booktitle={The Thirteenth International Conference on Learning Representations},
  year={2025}
}

@article{zhou2023anomalyclip,
  title={{Anomalyclip}: Object-agnostic prompt learning for zero-shot anomaly detection},
  author={Zhou, Qihang and Pang, Guansong and Tian, Yu and He, Shibo and Chen, Jiming},
  journal={arXiv preprint arXiv:2310.18961},
  year={2023}
}

@inproceedings{deng2022anomaly,
  title={Anomaly detection via reverse distillation from one-class embedding},
  author={Deng, Hanqiu and Li, Xingyu},
  booktitle={Proceedings of the IEEE/CVF conference on computer vision and pattern recognition},
  pages={9737--9746},
  year={2022}
}

@article{bergmann2022beyond,
  title={Beyond dents and scratches: Logical constraints in unsupervised anomaly detection and localization},
  author={Bergmann, Paul and Batzner, Kilian and Fauser, Michael and Sattlegger, David and Steger, Carsten},
  journal={International Journal of Computer Vision},
  volume={130},
  number={4},
  pages={947--969},
  year={2022},
  publisher={Springer}
}

@inproceedings{chen2024internvl,
  title={{Internvl}: Scaling up vision foundation models and aligning for generic visual-linguistic tasks},
  author={Chen, Zhe and Wu, Jiannan and Wang, Wenhai and Su, Weijie and Chen, Guo and Xing, Sen and Zhong, Muyan and Zhang, Qinglong and Zhu, Xizhou and Lu, Lewei and others},
  booktitle={Proceedings of the IEEE/CVF conference on computer vision and pattern recognition},
  pages={24185--24198},
  year={2024}
}

@misc{blackforestlabs2025flux2dev,
  title = {{FLUX}.2 [dev]},
  author = {{Black Forest Labs}},
  year = {2025},
  url = {https://huggingface.co/black-forest-labs/FLUX.2-dev}
}

@article{choi2012context,
  title={Context models and out-of-context objects},
  author={Choi, Myung Jin and Torralba, Antonio and Willsky, Alan S},
  journal={Pattern Recognition Letters},
  volume={33},
  number={7},
  pages={853--862},
  year={2012},
  publisher={Elsevier}
}

@inproceedings{bomatter2021pigs,
  title={{When pigs fly}: Contextual reasoning in synthetic and natural scenes},
  author={Bomatter, Philipp and Zhang, Mengmi and Karev, Dimitar and Madan, Spandan and Tseng, Claire and Kreiman, Gabriel},
  booktitle={Proceedings of the IEEE/CVF International Conference on Computer Vision},
  pages={255--264},
  year={2021}
}

@inproceedings{acharya2022detecting,
  title={Detecting out-of-context objects using graph context reasoning network},
  author={Acharya, Manoj and Roy, Anirban and Koneripalli, Kaushik and Jha, Susmit and Kanan, Christopher and Divakaran, Ajay},
  booktitle={IJCAI},
  year={2022}
}

@inproceedings{roy2025zeroshot,
  title={Zero-Shot Detection of Out-of-Context Objects Using Foundation Models},
  author={Roy, Anirban and Cobb, Adam and Kaur, Ramneet and Jha, Sumit and Bastian, Nathaniel D and Berenbeim, Alexander and Thomson, Robert and Cruickshank, Iain and Velasquez, Alvaro and Jha, Susmit},
  booktitle={2025 IEEE/CVF Winter Conference on Applications of Computer Vision (WACV)},
  pages={9186--9195},
  year={2025},
  organization={IEEE}
}

@article{chandola2009anomaly,
  title={Anomaly detection: A survey},
  author={Chandola, Varun and Banerjee, Arindam and Kumar, Vipin},
  journal={ACM computing surveys (CSUR)},
  volume={41},
  number={3},
  pages={1--58},
  year={2009},
  publisher={ACM New York, NY, USA}
}
}

\clearpage
\appendix

\section*{Appendix Overview}
\label{app:overview}

This appendix provides supplementary material organized as follows:

\begin{itemize}
    \item \textbf{Appendix~\ref{app:proof}}: Full proof of Proposition~\ref{prop:nonidentifiability}.
    \item \textbf{Appendix~\ref{app:model}}: Model architecture details including CSR, text refinement, CRM, and pixel-level anomaly mapping.
    \item \textbf{Appendix~\ref{app:hyperparams}}: Training protocol, hyperparameters, and loss weight sensitivity.
    \item \textbf{Appendix~\ref{app:dataset}}: Dataset construction, collection, statistics, and ethical considerations.
    \item \textbf{Appendix~\ref{app:ablations}}: Extended ablations including normal-only training, text loss joint sweep, corruption robustness and per-class results.
    \item \textbf{Appendix~\ref{app:structural}}: Zero-shot results on MVTec-AD and VisA structural anomaly benchmarks.
    \item \textbf{Appendix~\ref{app:impact}}: Broader impact and ethical considerations.
    \item \textbf{Appendix~\ref{app:qualitative}}: Additional qualitative results, and failure cases.
\end{itemize}

\section{Proof of Proposition~\ref{prop:nonidentifiability}}
\label{app:proof}

We restate the proposition and provide a complete proof with formal definitions.

\begin{definition}[Contextual Anomaly Detection Setting]
\label{def:setting}
Let $\mathcal{A}$ denote the space of subjects and $\mathcal{C}$ the space of contexts. A mapping $g: \mathcal{S} \times \mathcal{C} \to \mathcal{X}$, where $\mathcal{S} = \{a_1, \ldots, a_n\}$ and $a_i \in \mathcal{A}$, maps each subject set--context pair to an observed image $x \in \mathcal{X}$. A compatibility function $h: \mathcal{S} \times \mathcal{C} \to \{0, 1\}$ assigns the ground-truth anomaly label, where $h(\mathcal{S}, c) = 1$ indicates subject--context incompatibility. For clarity of exposition, the proof is stated for the single-subject case $\mathcal{S} = \{a\}$; the argument extends analogously to $|\mathcal{S}| > 1$.
\end{definition}

\begin{definition}[Intrinsic Representation and Detector]
\label{def:detector}
An \emph{intrinsic representation} is any measurable mapping $\phi: \mathcal{X} \to \mathcal{Z}$ that maps observations to a representation space $\mathcal{Z}$. An \emph{intrinsic detector} is a composite function $f: \mathcal{X} \to \{0, 1\}$ of the form $f(x) = \psi(\phi(x))$, where $\psi: \mathcal{Z} \to \{0, 1\}$ is an arbitrary decision function. We say $f$ is \emph{correct} on input $x = g(a, c)$ if $f(x) = h(a, c)$.
\end{definition}

\begin{propositionB}[Non-identifiability under intrinsic representations, restated]
Let $\phi: \mathcal{X} \to \mathcal{Z}$ be any intrinsic representation and $f(x) = \psi(\phi(x))$ any intrinsic detector. If there exists a subject $a \in \mathcal{A}$ and two contexts $c, c' \in \mathcal{C}$ such that
\begin{equation}
    \phi(g(a, c)) = \phi(g(a, c')) \quad \text{and} \quad h(a, c) \neq h(a, c'),
\end{equation}
then $f$ cannot be simultaneously correct on both $x = g(a, c)$ and $x' = g(a, c')$.
\end{propositionB}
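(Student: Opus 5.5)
The plan is a direct argument by contradiction that uses only Definition~\ref{def:detector}. The hypothesis gives an equality of representations, and composition with $\psi$ turns that into an equality of predictions. Label disagreement then makes simultaneous correctness impossible. No earlier result is needed beyond the definitions in Definitions~\ref{def:setting} and~\ref{def:detector}.

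\textbf{Step 1: predictions agree.} I would fix $a$, $c$, $c'$ as in the hypothesis and write $x = g(a,c)$ and $x' = g(a,c')$. Both lie in $\mathcal{X}$ because $g$ is a well-defined map. Since $\phi(x) = \phi(x')$ as elements of $\mathcal{Z}$, and $\psi$ is a function (single-valued) on $\mathcal{Z}$, we get
\begin{equation*}
f(x) = \psi(\phi(x)) = \psi(\phi(x')) = f(x').
\end{equation*}
Measurability of $\phi$ and any properties of $\psi$ are irrelevant here. The conclusion holds for every decision function $\psi$, which is exactly the strength claimed.

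\textbf{Step 2: correctness on both inputs forces the labels to agree.} Suppose, for contradiction, that $f$ is correct on both $x$ and $x'$. In the sense of Definition~\ref{def:detector}, this means $f(x) = h(a,c)$ and $f(x') = h(a,c')$. Chaining these with Step~1 gives
\begin{equation*}
h(a,c) = f(x) = f(x') = h(a,c'),
\end{equation*}
which contradicts $h(a,c) \neq h(a,c')$.

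\textbf{Step 3: pigeonhole reading and the one point needing care.} Equivalently, $f$ takes a single value $v \in \{0,1\}$ on the pair $\{x, x'\}$. Because $h(a,c)$ and $h(a,c')$ are distinct elements of $\{0,1\}$, exactly one of them equals $v$. So $f$ is correct on exactly one of the two inputs, a slightly sharper conclusion that I would record.

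There is no real obstacle; the only subtlety is bookkeeping. Correctness is defined relative to the pair $(a,c)$ that generated the input, not relative to the image alone. If $g(a,c) = g(a,c')$ as images, then $h$ assigns conflicting labels to one observation. The argument still goes through unchanged, because correctness is indexed by the generating pair.

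For the multi-subject remark, I would observe that the same three steps go through verbatim with $\{a\}$ replaced by a subject set $\mathcal{S}$. The argument never used anything about the structure of the subject.
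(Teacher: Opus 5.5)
Your proof is correct and follows the paper's argument essentially step for step: $\phi(x)=\phi(x')$ forces $f(x)=f(x')$ because $\psi$ is single-valued, and simultaneous correctness would then give $h(a,c)=h(a,c')$, a contradiction. Your remark that $f$ is correct on exactly one of the two inputs is a valid sharpening that the paper does not state.
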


\begin{proof}
We proceed by contradiction.

\textbf{Step 1: Setup.} Let $x = g(a, c)$ and $x' = g(a, c')$ be the two observations generated by the same subject $a$ in contexts $c$ and $c'$ respectively. By the premise:
\begin{align}
    \phi(x) &= \phi(g(a, c)) = \phi(g(a, c')) = \phi(x'), \label{eq:collapse} \\
    h(a, c) &\neq h(a, c'). \label{eq:labels}
\end{align}

\textbf{Step 2: Determinism of the detector.} The detector $f$ is a deterministic composition:
\begin{equation}
    f(x) = \psi(\phi(x)) \quad \text{and} \quad f(x') = \psi(\phi(x')).
\end{equation}
Since $\psi$ is a function and $\phi(x) = \phi(x')$ by \eqref{eq:collapse}, it follows that:
\begin{equation}
    f(x) = \psi(\phi(x)) = \psi(\phi(x')) = f(x'). \label{eq:equal_output}
\end{equation}

\textbf{Step 3: Correctness requirement.} For $f$ to be correct on both inputs simultaneously, we require:
\begin{align}
    f(x) &= h(a, c), \label{eq:correct1} \\
    f(x') &= h(a, c'). \label{eq:correct2}
\end{align}

\textbf{Step 4: Contradiction.} Substituting \eqref{eq:equal_output} into \eqref{eq:correct1} and \eqref{eq:correct2}:
\begin{equation}
    h(a, c) = f(x) = f(x') = h(a, c').
\end{equation}
This yields $h(a, c) = h(a, c')$, which directly contradicts \eqref{eq:labels}. Therefore, no intrinsic detector $f = \psi \circ \phi$ can be correct on both $x$ and $x'$. 
\end{proof}

\begin{remark}[Scope and implications]
\label{rem:scope}
Several observations follow from Proposition~\ref{prop:nonidentifiability}:

\begin{enumerate}
    \item \textbf{Generality.} The result holds for \emph{any} representation $\phi$ and \emph{any} decision function $\psi$, including deep neural networks with arbitrary capacity. The failure is not due to limited model expressiveness but due to information loss: an intrinsic representation that does not explicitly condition on context $c$ tends to suppress relational information when trained with object-centric objectives, discarding the contextual signal that determines the label. CLIP's contrastive pretraining objective optimizes for subject-level semantic alignment across contexts, which systematically suppresses contextual cues in favor of object-centric features, precisely the condition under which representation collisions identified in Proposition~\ref{prop:nonidentifiability} arise.

    \item \textbf{Practical relevance.} The condition $\phi(g(a,c)) = \phi(g(a,c'))$ is not pathological. Standard vision encoders, including CLIP, are pretrained with objectives that prioritize subject-level semantics over scene-level context. This induces a form of shortcut learning: the subject provides a high-gradient signal during pretraining, causing the encoder to suppress contextual cues. The result is a representation that is nearly invariant to context changes for a fixed subject, precisely the collision condition that Proposition~\ref{prop:nonidentifiability} identifies as the source of non-identifiability. We verify this empirically in Figure~\ref{fig:teaser}, where CLIP embeddings of the same subject in normal and anomalous contexts are nearly indistinguishable.

    \item \textbf{Necessary condition for resolution.} The proposition implies that any correct contextual anomaly detector must use a representation that does \emph{not} collapse $g(a,c)$ and $g(a,c')$ when $h(a,c) \neq h(a,c')$. This necessitates representations that separately encode subject and context information, which is the design principle underlying CC-CLIP.

    \item \textbf{Relation to non-identifiability in causal inference.} The result is analogous to non-identifiability under unobserved confounders in causal inference. The context $c$ acts as a necessary covariate for the label $y = h(a, c)$: observing only $x = g(a, c)$ through a representation $\phi$ that does not disentangle $c$ is analogous to marginalizing over a confounder; not because $c$ is unobserved in the image, but because the encoder's inductive bias suppresses it, effectively discarding the contextual signal needed for correct prediction. Explicit conditioning on context, as in our conditional compatibility formulation, resolves this identifiability failure in the same way that adjusting for confounders resolves identifiability in causal models.

    \item \textbf{Scope of the mapping function.} The mapping $g$ need not be invertible; Proposition~\ref{prop:nonidentifiability} requires only that two distinct subject--context pairs produce indistinguishable representations, which holds regardless of occlusion or ambiguity.

\end{enumerate}
\end{remark}

\section{Model Details}
\label{app:model}

This section provides architectural specifications for the components introduced in Section~\ref{sec:method}.

\subsection{CSR Architecture}
\label{app:csr}

Each Context-Selective Residual module $\text{CSR}_b^{(i)}$ (Eq.\ref{eq:csr_module}) consists of a two-layer MLP with a hidden dimension of $D/4$, where $D$ is the CLIP embedding dimension. Specifically:
\begin{equation}
    \text{CSR}_b^{(i)}(\mathbf{x}) = W_2^{(b,i)} \, \sigma\!\left(W_1^{(b,i)} \, \mathbf{x}\right),
\end{equation}
where $W_1^{(b,i)} \in \mathbb{R}^{(D/4) \times D}$, $W_2^{(b,i)} \in \mathbb{R}^{D \times (D/4)}$, and $\sigma$ is the GELU activation. The learnable mixing coefficient $\lambda_i$ is initialized to $0.01$ to ensure minimal perturbation of pretrained features at the start of training.

CSR adapters are inserted into the first $K{=}6$ transformer layers. Layers beyond $K$ remain frozen, preserving CLIP's deep-layer semantic alignment. Each branch $b \in \{s, c, g\}$ has its own set of CSR parameters, yielding $3 \times K = 18$ adapter modules in total. From the final transformer layer, we extract the class-token embedding $z_b \in \mathbb{R}^D$ (used for image-level scoring) and patch-token embeddings $P_b \in \mathbb{R}^{(N-1) \times D}$ (used for pixel-level anomaly mapping in Appendix~\ref{app:pixel}), where the representation reflects the cumulative effect of CSR adaptations applied across the first $K$ shared backbone layers.

\subsection{Text Refinement and Prompt Templates}
\label{app:text}

The text refinement module (Sec. \ref{text_refinement}) adapts the first $L{=}3$ layers of the CLIP text encoder. Each gated residual $\text{TR}^{(i)}$ is a two-layer MLP with hidden dimension $D/4$, mirroring the CSR design. The learnable gate $\gamma_i$ is initialized to $0.01$. The disentanglement objectives ($\mathcal{L}_{\text{ortho}}$, 
$\mathcal{L}_{\text{cons}}$, $\mathcal{L}_{\text{ground}}$) are fully defined in Eq.\ref{eq:text_refinement} of the main paper. Note that since $\tilde{t}_\mu$ in Eq.\ref{eq:text_refinement} is the un-normalized mean of $\tilde{t}_0$ and $\tilde{t}_1$, $\mathcal{L}_{\text{cons}}$ does not reduce to a function of cosine similarity alone, preserving the mathematical distinctness of $\mathcal{L}_{\text{ortho}}$ and $\mathcal{L}_{\text{cons}}$ as complementary objectives.

\textbf{Branch-specific prompts.} As described in Algorithm~\ref{alg:cococlip}, each branch receives a role-specific text prompt during training. For a class label \texttt{cls}, the prompts are:
\begin{itemize}
    \item Subject: \texttt{"a photo of [cls]"}
    \item Context: \texttt{"a scene with [cls] surroundings"}
    \item Global: \texttt{"a photo of [cls] in a [context]"}
\end{itemize}
The \texttt{[context]} placeholder in the global branch prompt is instantiated using scene descriptors from CAAD-Syn prompt annotations (e.g., ``highway'', ``beach'', ``living room'') during training. At inference, \texttt{[context]} is set to a generic placeholder \texttt{"scene"}, requiring no explicit context annotation. The branch embeddings $e_s, e_c, e_g$ are obtained by averaging over multiple prompt instantiations 
per role to reduce prompt sensitivity. At inference, branch prompts are not required; the CSR adapters are already specialized to decompose subject, context, and global representations directly from the input image.

\textbf{Normal/anomalous anchors.} The refined paired embeddings $(\tilde{t}_0, 
\tilde{t}_1)$ are produced from separate prompt sets:
\begin{itemize}
    \item Normal: \texttt{"a photo of [cls] in a normal place"}, \texttt{"a [cls] in a typical environment"}, \texttt{"a [cls] in a safe context"}
    \item Anomalous: \texttt{"a [cls] in an unusual place"}, \texttt{"a [cls] in an abnormal environment"}, \texttt{"a [cls] in an unsafe context"}
\end{itemize}
Final embeddings are obtained by averaging across templates and normalizing to unit length. These prompts require only the class name \texttt{cls} at inference, consistent with the prompt-based inference protocol of existing CLIP-based anomaly detection methods~\citep{jeong2023winclip, zhou2023anomalyclip, ma2025aaclip}.

\subsection{CRM Projections}
\label{app:crm}

The Compatibility Reasoning Module (Sec. \ref{compatrm}) uses learned linear projections $W_q, W_k \in \mathbb{R}^{d \times D}$ where $d = D/4$ is the attention dimension. The query is formed from the anomalous text anchor $\tilde{t}_1$, and keys are formed from the three branch embeddings $z_s, z_c, z_g$. The fused representation $z_{\text{crm}}$ is computed in the original embedding space $\mathbb{R}^D$ via weighted combination of the branch embeddings using the attention weights $\alpha$ (Eq. \ref{eq:CRM_2}).

To prevent single-branch dominance, the attention entropy $H(\alpha)$ is regularized (Eq. \ref{eq:CRM_4}). The fusion consistency term aligns fused logits with the mean of branch logits, ensuring that CRM reasoning is grounded in individual branch evidence rather than learning an independent prediction.

\subsection{Pixel-Level Anomaly Mapping}
\label{app:pixel}

For pixel-level anomaly mapping, we compute patch-level scores from the refined patch tokens $P_b \in \mathbb{R}^{(N-1) \times D}$ extracted in Appendix~\ref{app:csr}, without any spatial supervision or region masks.

\textbf{Contextual anomaly setting.} For contextual anomaly detection on CAAD, anomalies are semantic and relational rather than local or textural. The relevant signal resides in the final transformer layers, where representations capture high-level subject--context semantics. We therefore use only the final-layer patch tokens for scoring. For each patch $j$ and branch $b$:
\begin{equation}
    m_{b,j} = \text{sim}(p_{b,j}, \tilde{t}_1) - \text{sim}(p_{b,j}, \tilde{t}_0),
\end{equation}
where $p_{b,j}$ is the $j$-th normalized patch embedding. Branch patch margins are aggregated using the CRM attention weights $\alpha$ from Eq.~8:
\begin{equation}
    m_j = \alpha_s \, m_{s,j} + \alpha_c \, m_{c,j} + \alpha_g \, m_{g,j}.
\end{equation}
The patch scores $\{m_j\}$ are reshaped to the spatial token grid and bilinearly upsampled to the original image resolution, yielding a pixel-level anomaly map $M(x) \in \mathbb{R}^{H \times W}$.

\textbf{Structural anomaly setting.} For structural anomaly benchmarks such as MVTec-AD \citep{bergmann2019mvtec} and VisA \citep{zou2022spot} (Appendix~\ref{app:structural}), anomalies manifest as local appearance defects that require multi-scale spatial information. Following prior work, we aggregate patch features from multiple transformer layers rather than the final layer alone. The scoring and upsampling procedure remains the same. Details of the layer selection and the single-branch reduction used for these benchmarks are described in Appendix~\ref{app:structural}.

\section{Training Details}
\label{app:hyperparams}

\subsection{Training Protocol}
\label{app:training}

CC-CLIP is trained in two sequential stages with the CLIP backbone frozen throughout.

\textbf{Stage 1: Text refinement.} The first $L{=}3$ layers of the CLIP text encoder are adapted for 8 epochs using the disentanglement objectives (Eq.~6). All visual parameters remain frozen. This stage produces the refined normal/anomalous anchors $(\tilde{t}_0, \tilde{t}_1)$.

\textbf{Stage 2: Visual adaptation.} The CSR adapters and CRM projections are trained end-to-end for 20 epochs using the branch specialization loss $\mathcal{L}_{\text{align}}$, decorrelation loss $\mathcal{L}_{\text{decorr}}$, and fusion loss $\mathcal{L}_{\text{fuse}}$. Text refinement parameters are kept frozen in this stage. Training uses Adam with a learning rate of $2 \times 10^{-3}$ and cosine decay.

For few-shot settings, $N$ normal and $N$ anomalous images per class are sampled from CAAD-Syn. For full training, all available images are used. The same protocol is applied to all baselines for fair comparison. Full training completes in approximately 10 hours and inference on CAAD-Real in 10 minutes on a single H100 GPU.

\subsection{Hyperparameters}

Table~\ref{tab:hyperparams} summarizes the default hyperparameters used across all experiments.

\begin{table}[h]
\centering
\caption{Default hyperparameters.}
\label{tab:hyperparams}
\begin{tabular}{ll@{\hskip 48pt}ll}
\toprule
\textbf{Component} & \textbf{Setting} & \textbf{Component} & \textbf{Setting} \\
\midrule
CLIP backbone & ViT-L/14-336 & $\lambda_o$ (ortho) & 0.10 \\
Image resolution & $518 \times 518$ & $\lambda_c$ (cons) & 0.10 \\
Text learning rate & $5 \times 10^{-5}$ & $\lambda_g$ (ground) & 0.05 \\
Text epochs & 8 & $\lambda_{\text{decorr}}$ & 0.10 \\
Text batch size & 128 & $\lambda_{\text{fc}}$ (fuse-cons) & 0.5 \\
Text depth & $L = 3$ & $\lambda_{\text{fe}}$ (fuse-ent) & 0.03 \\
Image learning rate & $2 \times 10^{-3}$ & Optimizer & Adam \\
Image epochs & 20 & $(\beta_1, \beta_2)$ & $(0.5, 0.999)$ \\
Image batch size & 8 & Scheduler & MultiStepLR \\
CSR depth & $K = 6$ & Logit temperature & 100 \\
\bottomrule
\end{tabular}
\end{table}

\begin{figure}[!h]
\centering
\includegraphics[width=\textwidth]{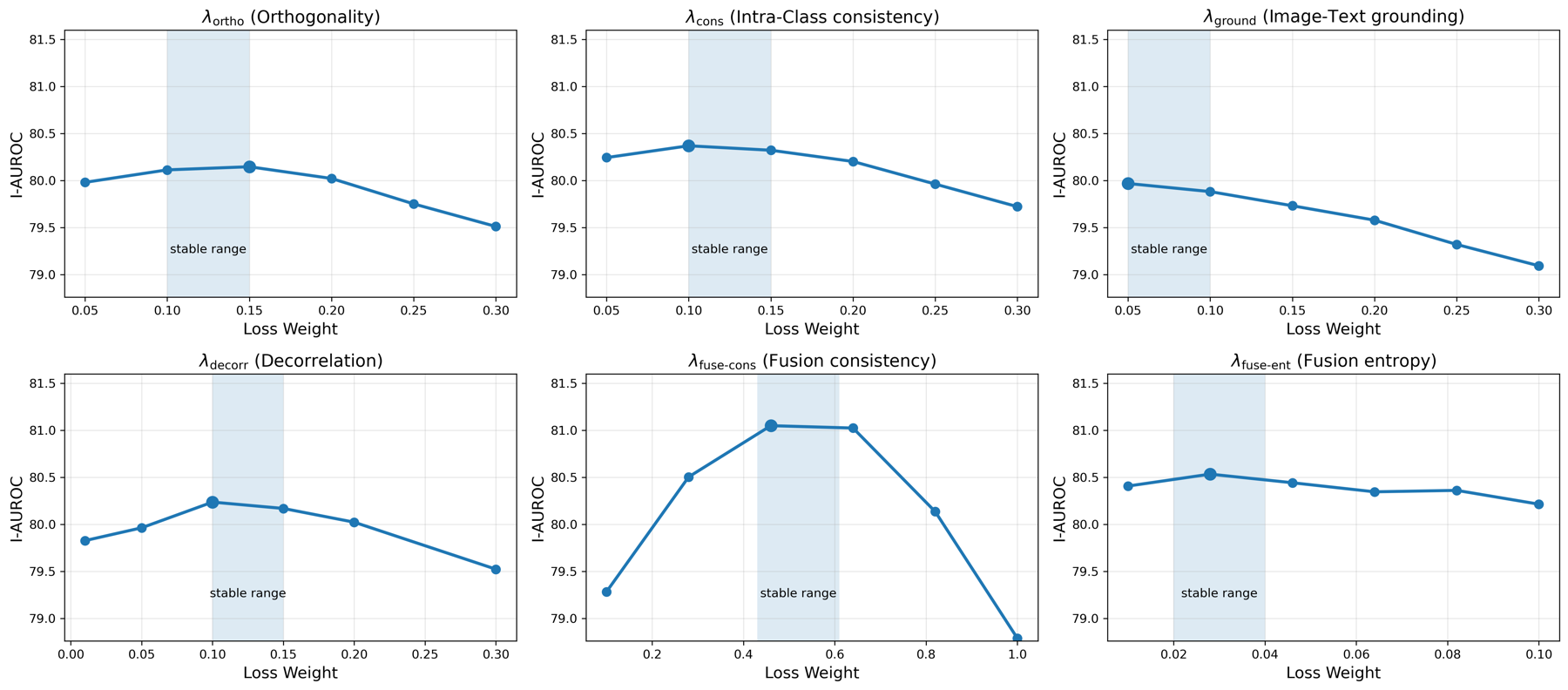}
\caption{Post-hoc loss weight sensitivity on CAAD-Real. Hyperparameters were selected on a CAAD-Syn validation split; curves confirm that chosen defaults generalize to real-world data across a broad range.}
\label{fig:sensitivity}
\end{figure}

\subsection{Loss Weight Sensitivity}
Hyperparameters were selected on a held-out CAAD-Syn validation split; 
Figure~\ref{fig:sensitivity} provides post-hoc confirmation of their stability on CAAD-Real. Each curve varies one weight while keeping all others at their default values. Across all components, performance remains stable within a broad range, with clear optima near the default values, confirming CC-CLIP is not sensitive to hyperparameter tuning.

The orthogonality weight $\lambda_o$ is annealed linearly over the first 30\% of text refinement epochs to prevent early collapse of the normal/anomalous embeddings. All other weights are fixed throughout training.

\section{Dataset Details}
\label{app:dataset}

\subsection{CAAD-Syn Construction}

CAAD-Syn is generated through a three-stage pipeline:

\textbf{Prompt generation.} Human-authored natural-language prompts describe both normal and anomalous subject--context pairings for each of the 15 categories. Each category has multiple normal and anomalous scene types to ensure no single context is exclusively associated with either label.

\textbf{Image synthesis.} Images are generated using FLUX.2-dev~\citep{blackforestlabs2025flux2dev}, a 32B-parameter rectified flow model that produces high-fidelity, artifact-free images with strong prompt adherence. This ensures that anomalies arise from semantic subject--context incompatibility rather than generative artifacts.


\textbf{Human verification.} Every image is manually inspected, requiring approximately 1--2 minutes per sample. Images with unrealistic placement, incorrect perspective, inconsistent lighting, or poor segmentation quality are discarded. Approximately 20\% of generated samples are removed through this process.

The final dataset contains 23K images across 15 object and action categories with balanced normal/anomalous labels. The automated pipeline is extensible: new categories and contexts can be added by providing additional prompts. Representative CAAD-Syn samples are shown in Appendix~\ref{app:syn_samples}. A subset of CAAD-Syn and CAAD-Real 
is available for inspection at \url{https://anonymous.4open.science/r/CC-CLIP_Dataset/ReadMe.md}; the full dataset will be released upon acceptance.

\subsection{CAAD-Real Collection}

CAAD-Real contains 1,026 real-world images (547 normal, 479 anomalous) covering the same 15 categories. Images are sourced from Pexels and Unsplash under free-use licenses. Collection involved exhaustive search across both platforms. Contextually anomalous images are inherently rare in natural photo collections, making the 479 anomalous samples a practical upper bound for honest evaluation. 

\subsection{Dataset Statistics and Comparison}

Tables~\ref{tab:dataset_comparison} and~\ref{tab:dataset_quality} contextualize CAAD within the anomaly detection landscape and quantify synthetic image quality. CLIP Score measures text-image alignment. DIRE-MSE measures proximity to the natural image manifold. CAAD-Syn matches or exceeds real-image benchmarks on both metrics.

\begin{table}[h]
\centering
\begin{minipage}[t]{0.48\textwidth}
\centering
\caption{Comparison with existing AD datasets.}
\label{tab:dataset_comparison}
\small
\setlength{\tabcolsep}{3pt}
\begin{tabular}{llccc}
\toprule
\textbf{Type} & \textbf{Dataset} & \textbf{Train} & \textbf{Test} & \textbf{Real} \\
\midrule
\multirow{3}{*}{Structural} & MVTec-AD \citep{bergmann2019mvtec} & 3.6K & 1.7K & \checkmark \\
& VisA \citep{zou2022spot} & 9.6K & 1.2K & \checkmark \\
& LOCO \citep{bergmann2022beyond} & 1.7K & 1.5K & \checkmark \\
\midrule
\multirow{2}{*}{OOC} & MIT-OOC \citep{choi2012context} & 0 & 42 & \checkmark \\
& COCO-OOC \citep{acharya2022detecting} & 30K & 2.4K & -- \\
\midrule
\multirow{2}{*}{Context.} & CAAD-Syn & 23K & -- & \\
& CAAD-Real & -- & 1.0K & \checkmark \\
\bottomrule
\end{tabular}
\end{minipage}
\hfill
\begin{minipage}[t]{0.48\textwidth}
\centering
\caption{Synthetic image quality metrics.}
\label{tab:dataset_quality}
\small
\setlength{\tabcolsep}{3pt}
\begin{tabular}{lcc}
\toprule
\textbf{Dataset} & \textbf{CLIP Score $\uparrow$} & \textbf{DIRE-MSE $\downarrow$} \\
\midrule
MIT-OOC \citep{choi2012context} & 0.29 & 0.31 \\
COCO-OOC \citep{acharya2022detecting} & 0.31 & 0.27 \\
\midrule
CAAD-Syn & \textbf{0.36} & \textbf{0.22} \\
\bottomrule
\end{tabular}
\vspace{6pt}
\end{minipage}
\end{table}
\subsection{Ethical Considerations}

CAAD-Syn is fully synthetic and contains no identifiable individuals or copyrighted imagery. All prompts are manually curated to avoid sensitive or harmful content. CAAD-Real images are sourced under free-use licenses from Pexels and Unsplash. The dataset is intended exclusively for research on visual reasoning and anomaly detection.

\section{Extended Ablations}
\label{app:ablations}

\subsection{Normal-Only Training}

Table~\ref{tab:normal_only} evaluates all methods when trained using only normal images, without exposure to anomalous examples. This setting is stricter and aligns with traditional one-class anomaly detection assumptions.

\begin{table}[h]
\centering
\caption{Normal-only training on CAAD-Syn, evaluated on CAAD-Real.}
\label{tab:normal_only}
\begin{tabular}{lcc}
\toprule
\textbf{Method} & \textbf{I-AUROC} & \textbf{I-AP} \\
\midrule
CLIP & 39.60 & 42.20 \\
WinCLIP & 50.80 & 51.30 \\
AnomalyCLIP & 52.50 & 58.20 \\
AA-CLIP & 52.80 & 56.10 \\
\midrule
CC-CLIP & \textbf{70.40} & \textbf{72.20} \\
\bottomrule
\end{tabular}
\end{table}

As expected, performance decreases for all methods compared to the balanced training setting (Table~\ref{tab:main}). However, CC-CLIP maintains a substantial margin over all baselines, indicating that the improvements stem from the architectural design of text-conditioned decomposition and compatibility reasoning rather than from access to anomalous supervision.

\subsection{Text Loss Joint Sweep}
A natural question is whether $\mathcal{L}_{\text{ortho}}$ and $\mathcal{L}_{\text{cons}}$ interact adversarially, since orthogonality pushes $\tilde{t}_0$ and $\tilde{t}_1$ apart while consistency pulls them toward a shared prototype. Table~\ref{tab:joint_sweep} shows a post-hoc joint sweep on CAAD-Real; weights were selected on a held-out CAAD-Syn 
validation split. Performance is stable along the diagonal and peaks near the default values, confirming that the two objectives are complementary: orthogonality separates contextual semantic states while consistency preserves shared class identity.

\begin{table}[h]
\centering
\begin{minipage}[t]{0.51\textwidth}
\centering
\caption{Post-hoc joint sweep of $\lambda_o$ and $\lambda_c$ on CAAD-Real.}
\label{tab:joint_sweep}
\small
\setlength{\tabcolsep}{4pt}
\begin{tabular}{c|cccc}
\toprule
\diagbox{$\lambda_o$}{$\lambda_c$} & 0.05 & 0.10 & 0.15 & 0.30 \\
\midrule
0.05 & 75.60 & 76.20 & 76.60 & 73.20 \\
0.10 & 75.80 & \textbf{80.21} & 79.80 & 72.50 \\
0.15 & 77.40 & 79.00 & 78.60 & 71.80 \\
0.30 & 73.10 & 72.30 & 71.50 & 68.40 \\
\bottomrule
\end{tabular}
\end{minipage}
\hfill
\begin{minipage}[t]{0.45\textwidth}
\centering
\caption{Corruption robustness on CAAD-Real.}
\label{tab:corruption}
\small
\setlength{\tabcolsep}{4pt}
\begin{tabular}{lcc}
\toprule
\textbf{Corruption} & \textbf{I-AUROC} & \textbf{$\Delta$} \\
\midrule
None (original) & 80.21 & -- \\
JPEG (Q=10) & 80.11 & -0.10 \\
Gaussian ($\sigma$=0.1) & 79.40 & -0.81 \\
Heavy blur ($k$=15) & 77.40 & -2.81 \\
\bottomrule
\end{tabular}
\end{minipage}
\end{table}

\subsection{Corruption Robustness}

To verify that CC-CLIP relies on semantic subject--context reasoning rather than low-level visual cues, we apply three corruptions to CAAD-Real images at test time. Performance remains stable under JPEG compression and Gaussian noise, which specifically degrade or introduce low-level artifacts. The minor drop under heavy blur is expected, as blur destroys semantic object structure rather than superficial cues.

\subsection{Per-Class Results on CAAD-Real}

Table~\ref{tab:perclass} reports per-class I-AUROC on CAAD-Real under full training.

\begin{table}[h]
\centering
\caption{Per-class I-AUROC on CAAD-Real (full training). Mean $\pm$ std over 3 runs.}
\label{tab:perclass}
\setlength{\tabcolsep}{3pt}
\begin{tabular}{lc@{\hskip 12pt}lc}
\toprule
\textbf{Class} & \textbf{I-AUROC} & \textbf{Class} & \textbf{I-AUROC} \\
\midrule
Aeroplane & 91.20 $\pm$ 0.8 & Person cycling & 90.50 $\pm$ 1.1 \\
Animal & 79.50 $\pm$ 1.3 & Person riding horse & 71.80 $\pm$ 1.5 \\
Boat & 72.40 $\pm$ 1.2 & Person running & 70.50 $\pm$ 1.4 \\
Camping tent & 71.20 $\pm$ 1.6 & Person skateboarding & 83.10 $\pm$ 0.9 \\
Car parked & 87.30 $\pm$ 0.7 & Person with umbrella & 74.80 $\pm$ 1.3 \\
Child playing & 73.80 $\pm$ 1.4 & Shopping cart & 96.20 $\pm$ 0.5 \\
Fire & 94.10 $\pm$ 0.6 & Sofa & 94.80 $\pm$ 0.4 \\
Motorbike & 72.10 $\pm$ 1.5 & & \\
\midrule
\textbf{Overall} & \textbf{80.21 $\pm$ 0.5} & & \\
\bottomrule
\end{tabular}
\end{table}

\subsection{VLM Baseline Prompting Variants}
\label{app:vlm_prompts}
Table~\ref{tab:vlm_prompts} reports I-AUROC for Qwen2-VL and InternVL under two 
zero-shot prompting strategies. All prompts elicit a continuous normality score (0--10) 
to enable AUROC computation. The exact prompt strings are as follows:
\begin{itemize}
    \item \textbf{P1:} ``Rate from 0 to 10 how normal this scene is, where 0 means 
    completely anomalous and 10 means perfectly normal. Reply with only a number.''
    \item \textbf{P2:} ``Rate this image from 0 to 10 based on how typical or expected 
    the scene appears. Reply with only a number.''
\end{itemize}
We note that this comparison is not strictly controlled for either side: CC-CLIP trains on synthetic CAAD-Syn data whereas VLMs are evaluated zero-shot, but VLMs are pretrained on web-scale corpora that may include images similar to the web-sourced CAAD-Real images, providing an implicit data advantage. That CC-CLIP outperforms VLMs despite this suggests the benefit of explicit subject--context decomposition over general-purpose reasoning.
\begin{table}[h]
\centering
\caption{VLM performance under different zero-shot prompting strategies on CAAD-Real. CC-CLIP shown for reference.}
\label{tab:vlm_prompts}
\setlength{\tabcolsep}{6pt}
\begin{tabular}{llc}
\toprule
\textbf{Prompt} & \textbf{Model} & \textbf{I-AUROC} \\
\midrule
\multirow{2}{*}{P1: Normality rating} 
& Qwen2-VL & 73.12 \\
& InternVL  & 44.53 \\
\midrule
\multirow{2}{*}{P2: Typicality rating} 
& Qwen2-VL & 69.52 \\
& InternVL  & 52.23 \\
\midrule
CC-CLIP (Ours) & --  & \textbf{80.21} \\
\bottomrule
\end{tabular}
\end{table}

CC-CLIP surpasses all VLM prompt variants despite being orders of magnitude smaller in model size.





\section{Structural Anomaly Detection Results}
\label{app:structural}

\subsection{Single-Branch Configuration}

For structural anomaly benchmarks such as MVTec-AD~\citep{bergmann2019mvtec} and VisA~\citep{zou2022spot}, anomalies are local appearance defects with no meaningful subject--context distinction. Since contextual decomposition is unnecessary in this setting, we use only the global branch with a single CSR adapter, reducing CC-CLIP to a standard single-stream architecture. Pixel-level scoring follows standard practice by aggregating patch features from multiple transformer layers~\citep{zhou2023anomalyclip, ma2025aaclip}, as structural defects require multi-scale spatial information.

\subsection{Results}

Table~\ref{tab:structural} reports zero-shot cross-dataset transfer following the protocol of AA-CLIP \citep{ma2025aaclip}: the model is trained on the full split of one dataset and evaluated on the other without any target-domain supervision. Class information is provided only through text prompts at inference.

\begin{table}[h]
\centering
\caption{Zero-shot I-AUROC and P-AUROC on MVTec-AD and VisA. CC-CLIP uses a single-branch configuration. \textbf{Best} and \underline{second best} results highlighted.}
\label{tab:structural}
\setlength{\tabcolsep}{4pt}
\begin{tabular}{llcccc}
\toprule
& & \multicolumn{2}{c}{\textbf{MVTec-AD}} & \multicolumn{2}{c}{\textbf{VisA}} \\
\cmidrule(lr){3-4} \cmidrule(lr){5-6}
\textbf{Method} & \textbf{Venue} & I-AUROC & P-AUROC & I-AUROC & P-AUROC \\
\midrule
CLIP & ICML 2021 & 86.1 & 38.4 & 66.4 & 46.6 \\
WinCLIP & CVPR 2023 & 91.8 & 85.1 & 78.0 & 79.6 \\
MVFA-AD & CVPR 2024 & 86.6 & 84.9 & 76.5 & 93.4 \\
AnomalyCLIP & ICLR 2024 & 90.9 & 91.1 & 82.1 & 95.4 \\
AdaCLIP & ECCV 2024 & 90.0 & 89.9 & 84.3 & 95.5 \\
AA-CLIP & CVPR 2025 & \underline{92.0} & \underline{91.9} & \underline{84.6} & \underline{95.5} \\
\midrule
CC-CLIP & -- & \textbf{94.2} & \textbf{92.8} & \textbf{84.9} & \textbf{95.5} \\
\bottomrule
\end{tabular}
\end{table}

CC-CLIP achieves state-of-the-art performance on MVTec-AD \citep{bergmann2019mvtec} and matches top results on VisA \citep{zou2022spot} in this single-branch configuration. This confirms that the text refinement and disentanglement objectives improve CLIP's feature quality independently of the multi-branch contextual reasoning, and that CC-CLIP does not sacrifice structural anomaly detection capability.

\section{Broader Impact}
\label{app:impact}
Contextual anomaly detection has positive applications in safety-critical domains such as surveillance, autonomous driving, and industrial monitoring, where detecting semantically inappropriate objects or actions in a scene can prevent hazardous situations. However, the notion of what constitutes a ``normal'' subject--context pairing is inherently culturally and geographically situated. Additionally, CC-CLIP inherits CLIP's pretraining biases, which may associate certain subjects with culturally specific contexts (e.g., associating particular activities with specific environments), potentially propagating skewed normality judgments. Deploying such systems without careful calibration could lead to false positives in unfamiliar but legitimate contexts, disproportionately affecting underrepresented environments. We encourage practitioners to validate on 
domain-specific data before deployment.

\section{Additional Qualitative Results}
\label{app:qualitative}

\subsection{CRM Attention for Contextual Reasoning}

\begin{figure}[!h]
\centering
\includegraphics[width=1\textwidth]{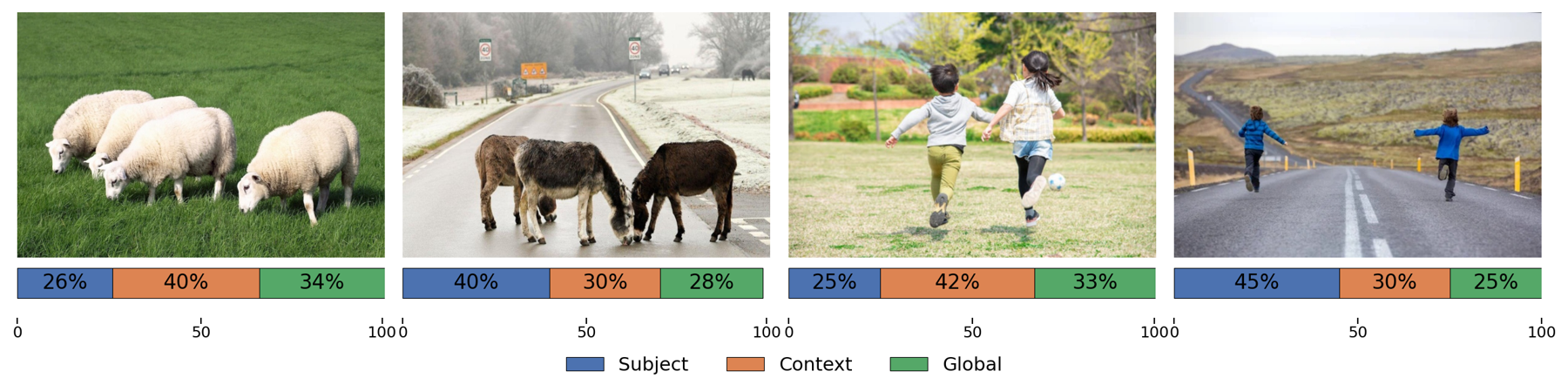}
\caption{CRM branch attention for the same subject in normal (left) and anomalous (right) contexts. Attention shifts toward the subject branch when contextual incompatibility is detected.}
\label{fig:crm_attention}
\end{figure}

Figure~\ref{fig:crm_attention} visualizes CRM branch attention for same-category subjects in normal and anomalous contexts. For compatible scenes, the model distributes attention across context and global branches, reflecting contextual support. For incompatible scenes, attention shifts toward the subject branch, signaling subject--context mismatch. This redistribution confirms that anomaly decisions arise from conditional compatibility reasoning rather than appearance alone.

\subsection{Failure Cases}

Figure~\ref{fig:failures} shows representative failure modes on CAAD-Real.

\textbf{False positive (rooftop tent on vehicle).} The model flags a rooftop tent mounted on an off-road vehicle as anomalous, despite this being a common real-world configuration. This suggests the model conflates distributional novelty with contextual incompatibility, highlighting a limitation in training coverage rather than reasoning capability.

\textbf{Mislocalization (skateboarders).} The heatmap focuses on the visually salient foreground figure rather than the semantically anomalous elements. This indicates that the model occasionally defaults to visual saliency over contextual reasoning, particularly in cluttered scenes with multiple competing subjects.

\begin{figure}[!h]
\centering
\includegraphics[width=0.8\textwidth]{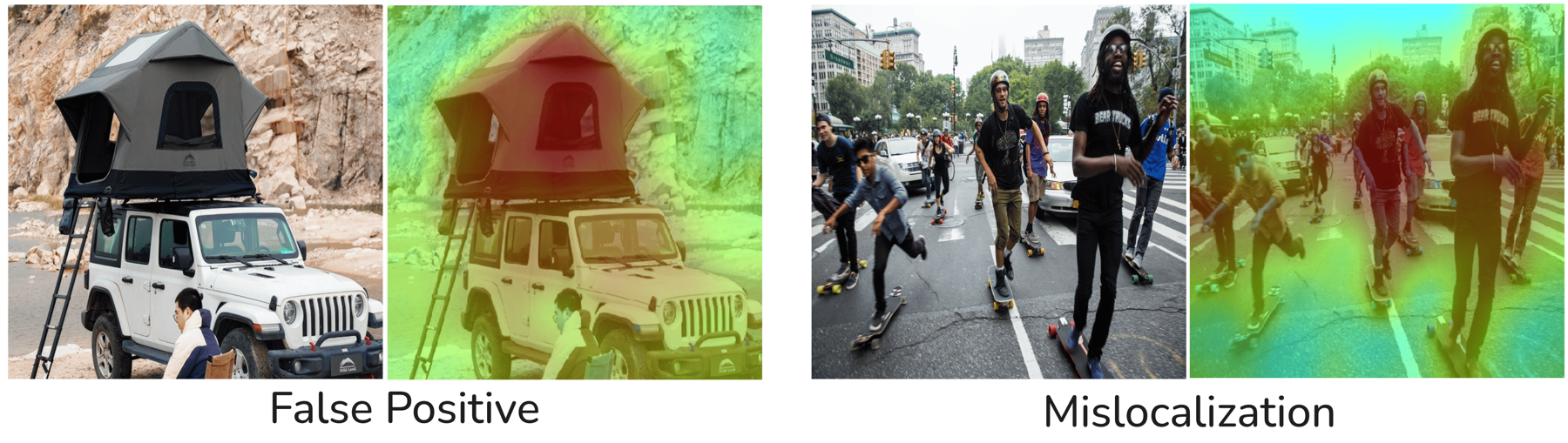}
\caption{Failure cases on CAAD-Real. Left: rare combination unseen during training. Right: ambiguous context where subject compatibility is subjective.}
\label{fig:failures}
\end{figure}

\subsection{CAAD-Syn Sample Images}
\label{app:syn_samples}

Figure~\ref{fig:syn_samples} shows representative normal and anomalous images from CAAD-Syn across four categories, illustrating the semantic subject--context incompatibility that defines contextual anomalies.

\begin{figure}[h]
    \centering
    \includegraphics[width=\linewidth]{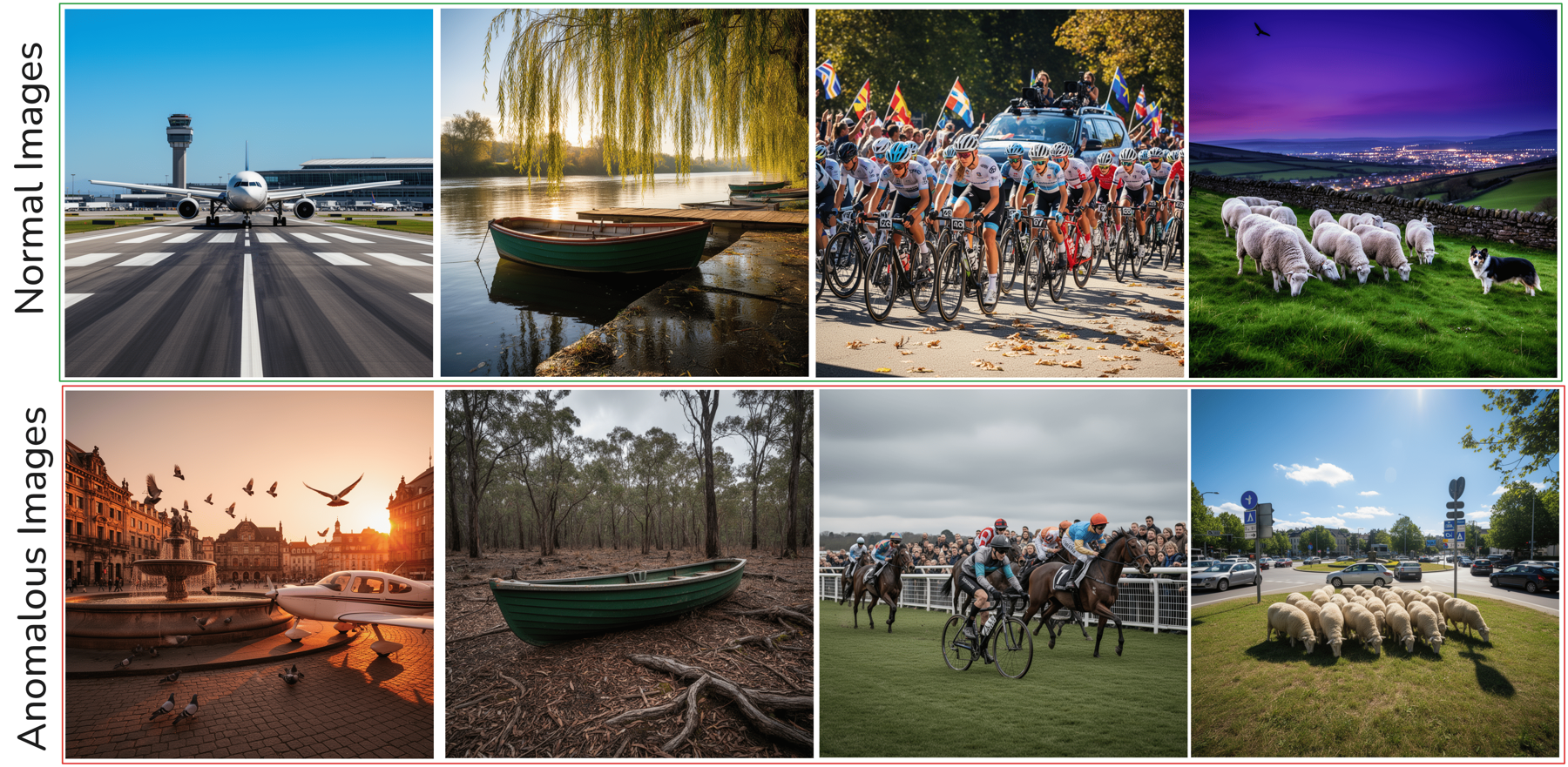}
    \caption{Representative normal (top) and anomalous (bottom) images from CAAD-Syn 
    across four categories. Anomalies arise from semantic subject--context incompatibility 
    rather than appearance-level defects. All images are synthetically generated using 
    FLUX.2-dev~\citep{blackforestlabs2025flux2dev}.}
    \label{fig:syn_samples}
\end{figure}

\subsection{Class-Wise Anomaly Heatmaps on CAAD-Real}

Figure~\ref{fig:classwise_heatmaps} shows anomaly heatmaps for all 15 categories in CAAD-Real. CC-CLIP consistently localizes the contextually incompatible subject across diverse object types, actions, and scene contexts.

\begin{figure}[h]
\centering
\includegraphics[width=\textwidth]{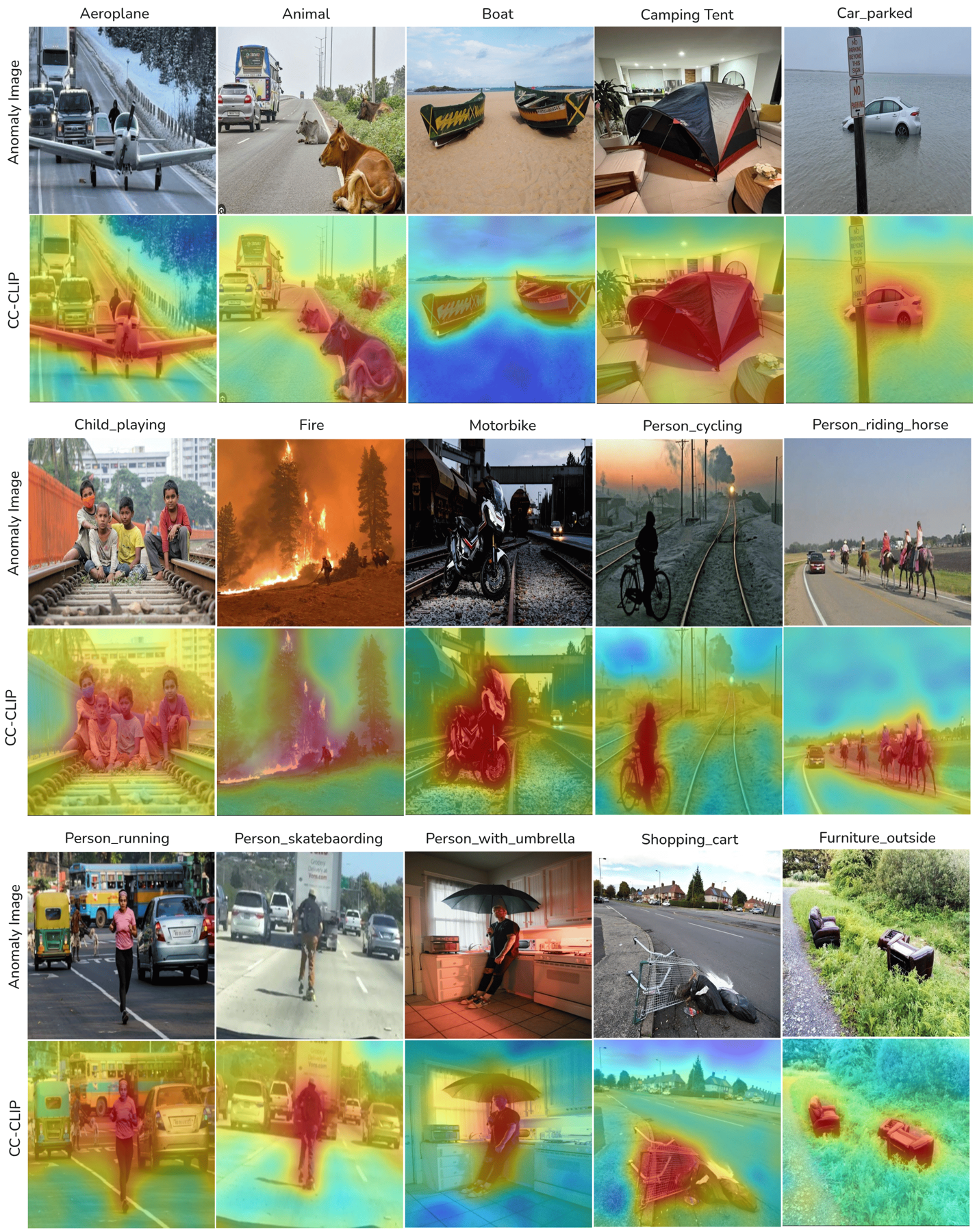}
\caption{Class-wise anomaly heatmaps on CAAD-Real. For each class: input image (top) and CC-CLIP heatmap (bottom). The model consistently highlights the subject responsible for the contextual violation.}
\label{fig:classwise_heatmaps}
\end{figure}

\subsection{Structural Anomaly Localization}

Figures~\ref{fig:mvtec_qual} and~\ref{fig:visa_qual} show zero-shot pixel-level anomaly maps on MVTec-AD and VisA respectively. In single-branch mode, CC-CLIP produces precise defect localization maps despite receiving no supervision from the target dataset.

\begin{figure}[!th]
\centering
\includegraphics[width=\textwidth]{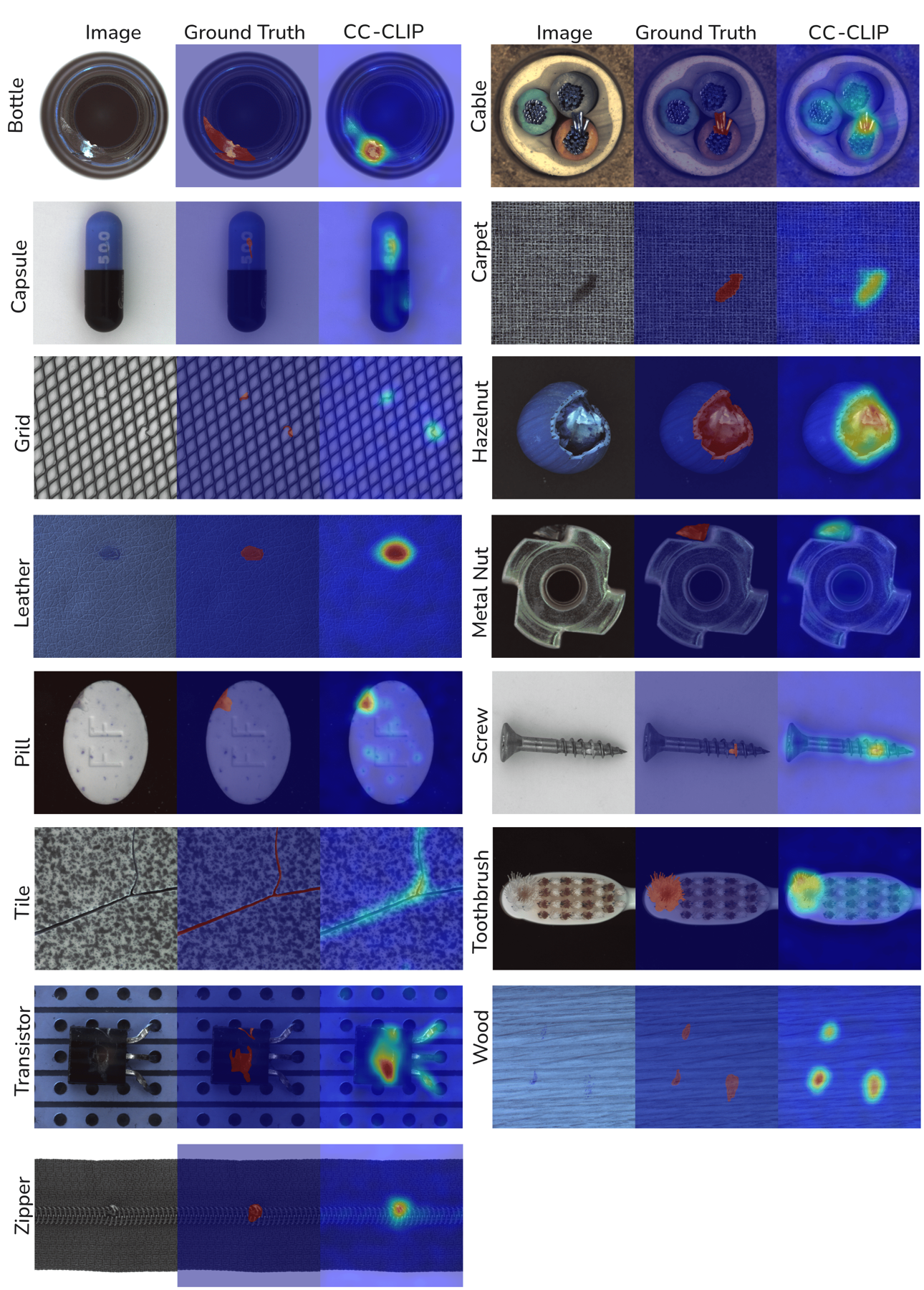}
\caption{Zero-shot anomaly localization on MVTec-AD. CC-CLIP is trained on VisA and evaluated without any MVTec-AD supervision.}
\label{fig:mvtec_qual}
\end{figure}

\begin{figure}[!ht]
\centering
\includegraphics[width=\textwidth]{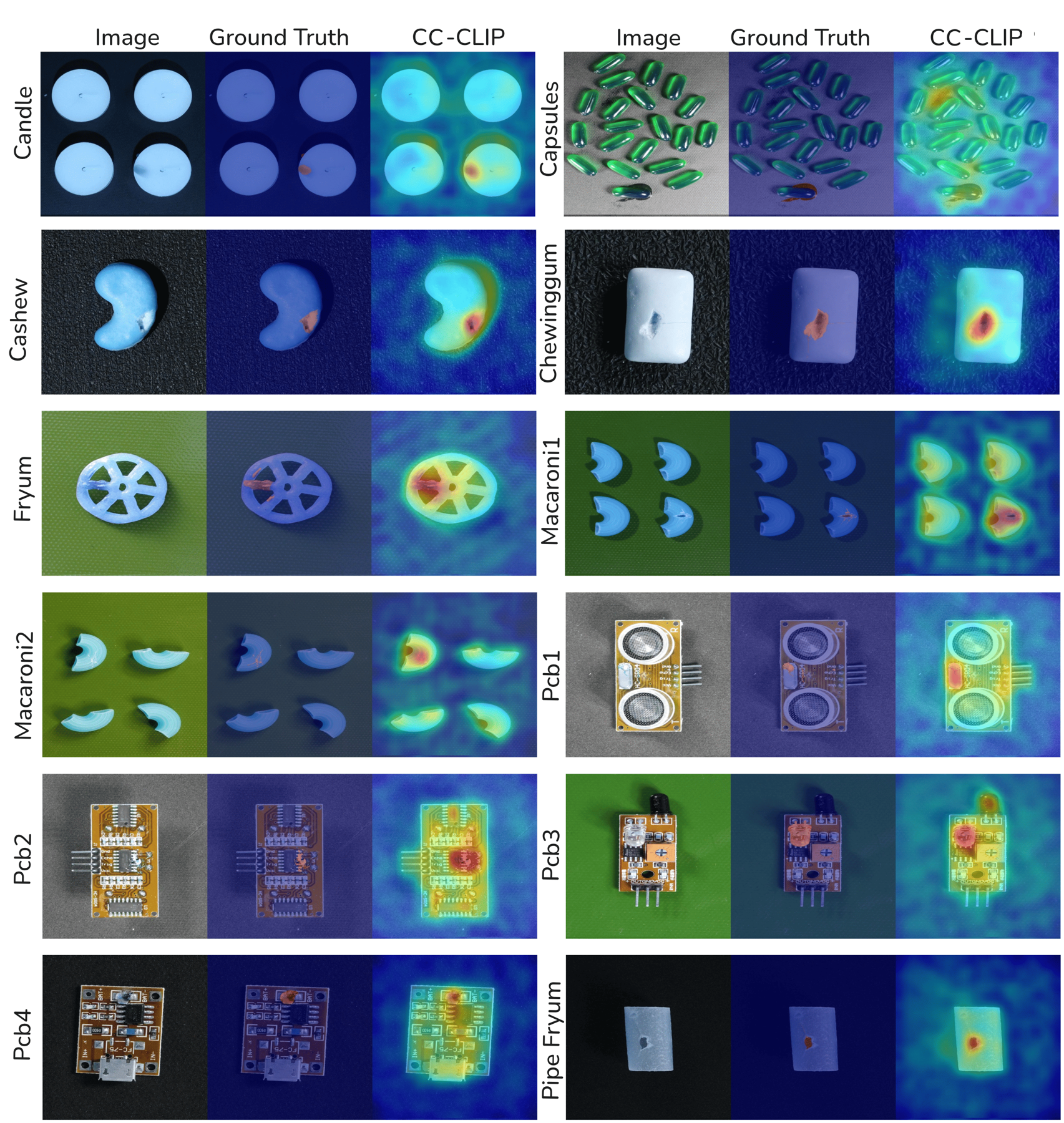}
\caption{Zero-shot anomaly localization on VisA. CC-CLIP is trained on MVTec-AD and evaluated without any VisA supervision.}
\label{fig:visa_qual}
\end{figure}

\clearpage
\newpage

\end{document}